\newcommand{\smib}[0]{\textsc{Mime}}
\newcommand{\an}[0]{\textsc{An}}
\newcommand{\anls}[0]{\textsc{An-ls}}
\newcommand{\wan}[0]{\textsc{Wan}}
\newcommand{\epr}[0]{\textsc{Epr}}
\newcommand{\role}[0]{\textsc{Role}}
\newcommand{\smilee}[0]{\textsc{Smile}}
\newcommand{\emm}[0]{\textsc{Em}}
\newcommand{\emapl}[0]{\textsc{Em-Apl}}
\newcommand{\lagc}[0]{\textsc{Plc}}
\newcommand{\proposed}[0]{\textsc{Crisp}}
\newcommand{\lln}[0]{\textsc{Ll}}
\newcommand{\llr}[0]{\textsc{Ll-R}}
\newcommand{\llct}[0]{\textsc{Ll-Ct}}
\newcommand{\llcp}[0]{\textsc{Ll-Cp}}
\newcommand{\boost}[0]{\textsc{BoostLU}}
\theoremstyle{plain}
\newtheorem{theorem}{Theorem}[section]
\newtheorem{lemma}[theorem]{Lemma}
\theoremstyle{definition}
\theoremstyle{remark}
\title{Can Class-Priors Help Single-Positive \\ Multi-Label Learning?}
\author{%
  Biao Liu \\
  School of Computer Science and Engineering \\
  Southeast University\\
  Nanjing, China \\
  \texttt{liubiao01@seu.edu.cn} \\
  \And
  Ning Xu \\
  School of Computer Science and Engineering \\
  Southeast University\\
  Nanjing, China \\
  \texttt{xning@seu.edu.cn} \\
  \And
  Jie Wang \\
  School of Computer Science and Engineering \\
  Southeast University\\
  Nanjing, China \\
  \texttt{wangjie2022@seu.edu.cn} \\
  \And
  Xin Geng \\
  School of Computer Science and Engineering \\
  Southeast University\\
  Nanjing, China \\
  \texttt{xgeng@seu.edu.cn} \\
}
\begin{document}

\maketitle

\begin{abstract}
Single-positive multi-label learning (SPMLL) is a weakly supervised multi-label learning problem, where each training example is annotated with only one positive label. Existing SPMLL methods typically assign pseudo-labels to unannotated labels with the assumption that prior probabilities of all classes are identical.
However, the class-prior of each category may differ significantly in real-world scenarios, which makes the predictive model not perform as well as expected due to the unrealistic assumption on real-world application.
To alleviate this issue, a novel framework named {\proposed}, i.e., Class-pRiors Induced Single-Positive multi-label learning, is proposed. Specifically, a class-priors estimator is introduced, which can estimate the class-priors that are theoretically guaranteed to converge to the ground-truth class-priors. In addition, based on the estimated class-priors, an unbiased risk estimator for classification is derived, and the corresponding risk minimizer can be guaranteed to approximately converge to the optimal risk minimizer on fully supervised data.
Experimental results on ten MLL benchmark datasets demonstrate the effectiveness and superiority of our method over existing SPMLL approaches.    
\end{abstract}

\section{Introduction}\label{sec:introduction}
  
Multi-label learning (MLL) is a learning paradigm that aims to train a model on examples associated with multiple labels to accurately predict relevant labels for unknown instances \cite{zhang2013review, liu2021emerging}. Over the past decade, MLL has been successfully applied to various real-world applications, including image annotation \cite{wang2009multi}, text classification \cite{liu2017deep}, and facial expression recognition \cite{chen2020label}.

Compared with multi-class-single-label learning, where each example is associated with a unique label, MLL involves instances that are assigned multiple labels. As the number of examples or categories is large, accurately annotating each label of an example becomes exceedingly challenging. To address the high annotation cost, single-positive multi-label learning (SPMLL) has been proposed \cite{cole2021multi, xu2022one}, where each training example is annotated with only one positive label. Moreover, since many examples in multi-class datasets, such as ImageNet \cite{YunOHHCC21}, contain multiple categories but are annotated with a single label, employing SPMLL allows for the derivation of multi-label predictors from existing numerous multi-class datasets, thereby expanding the applicability of MLL.

To address the issue that model tends to predict all labels as positive if trained with only positive labels, existing SPMLL methods typically assign pseudo-labels to unannotated labels. Cole et al. updates the pseudo-labels as learnable parameters with a regularization to constrain the number of expected positive labels \cite{cole2021multi}. Xu et al. recovers latent soft pseudo-labels by employing variational label enhancement \cite{xu2022one}. Zhou et al. adopts asymmetric-tolerance strategies to update pseudo-labels cooperating with an entropy-maximization loss \cite{zhou2022acknowledging}. Xie et al. utilizes contrastive learning to learn the manifold structure information and updates the pseudo-labels with a threshold \cite{xielabel}.

These approaches rely on a crucial assumption that prior probabilities of all classes are identical. However, in real-world scenarios, the class-prior of each category may differ significantly. This unrealistic assumption will introduce severe biases into the pseudo-labels, further impacting the training of the model supervised by the inaccurate pseudo-labels. 
As a result, the learned model could not perform as well as expected.

Motivated by the above consideration, we propose a novel framework named {\proposed}, i.e., Class-pRiors Induced Single-Positive multi-label learning.
Specifically, a class-priors estimator is derived, which determines an optimal threshold by estimating the ratio between the fraction of positive labeled samples and the total number of samples receiving scores above the threshold. The estimated class-priors can be theoretically guaranteed to converge to the ground-truth class-priors. 
In addition, based on the estimated class-priors, an unbiased risk estimator for classification is derived, which guarantees the learning consistency \cite{Foundations} and ensures that the obtained risk minimizer would approximately converge to the optimal risk minimizer on fully supervised data. Our contributions can be summarized as follows:

\begin{itemize}[leftmargin=*]
  \item Practically, for the first time, we propose a novel framework for SPMLL named {\proposed}, which estimates the class-priors and then an unbiased risk estimator is derived based on the estimated class-priors, addressing the unrealistic assumption of identical class-priors for all classes.
  \item Theoretically, the estimated class-priors can be guaranteed to converge to the ground-truth class-priors. Additionally, we prove that the risk minimizer corresponding to the proposed risk estimator can be guaranteed to approximately converge to the optimal risk minimizer on fully supervised data.
\end{itemize}

Experiments on four multi-label image classification (MLIC) datasets and six MLL datasets show the effectiveness of our methods over several existing SPMLL approaches.

\section{Related Work}

Multi-label learning is a supervised machine learning technique where an instance is associated with multiple labels simultaneously. The study of label correlations in multi-label learning has been extensive, and these correlations can be categorized into first-order, second-order, and high-order correlations. First-order correlations involve adapting binary classification algorithms for multi-label learning, such as treating each label as an independent binary classification problem \cite{boutell2004learning, read2011classifier}. Second-order correlations model pairwise relationships between labels \cite{elisseeff2001kernel, furnkranz2008multilabel}. High-order correlations take into account the relationships among multiple labels, such as employing graph convolutional neural networks to extract correlation information among all label nodes \cite{chen2019multi}. Furthermore, there has been an increasing interest in utilizing label-specific features, which are tailored to capture the attributes of a specific label and enhance the performance of the models \cite{yu2021multi, hang2021collaborative}.

In practice, accurately annotating each label for every instance in multi-label learning is unfeasible due to the immense scale of the output space. Consequently, multi-label learning with missing labels (MLML) has been introduced \cite{sun2010multi}. MLML methods primarily rely on low-rank, embedding, and graph-based models. The presence of label correlations implies a low-rank output space \cite{liu2021emerging}, which has been extensively employed to fill in the missing entries in a label matrix \cite{xu2013speedup, yu2014large, xu2016robust}. Another widespread approach is based on embedding techniques that map label vectors to a low-dimensional space, where features and labels are jointly embedded to exploit the complementarity between the feature and label spaces \cite{yeh2017learning, wang2019robust}. Additionally, graph-based models are prevalent solutions for MLML, constructing a label-specific graph for each label from a feature-induced similarity graph and incorporating manifold regularization into the empirical risk minimization framework \cite{sun2010multi, wu2014multi}.

In SPMLL, a specific case of multi-label learning with incomplete labels, only one of the multiple positive labels is observed. The initial work treats all unannotated labels as negative and updates the pseudo-labels as learnable parameters, applying a regularization to constrain the number of expected positive labels \cite{cole2021multi}. A label enhancement process is used to recover latent soft labels and train the multi-label classifier \cite{xu2022one}. The introduction of an asymmetric pseudo-label approach utilizes asymmetric-tolerance strategies for pseudo-labels, along with an entropy-maximization loss \cite{zhou2022acknowledging}. Additionally, Xie et al. proposes a label-aware global consistency regularization method, leveraging the manifold structure information learned from contrastive learning to update pseudo-labels \cite{xielabel}.

\section{Preliminaries}

Multi-label learning (MLL) aims to train a model on the examples that are associated with multiple labels and obtain a predictive model that is able to predict the relevant labels for an unknown instance accurately. 
Let $ \mathcal{X}=\mathbb{R}^q $ denote the instance space and $ \mathcal{Y}=\{0, 1\}^c $ denote the label space with $ c $  classes. A MLL training set is denoted by $ \mathcal{D}=\{(\bm x_i, \bm y_i) \vert 1\leq i\leq n\} $ where $ \bm x_i \in\mathcal{X} $ is a $ q $-dimensional instance and $ \bm y_i \in\mathcal{Y} $ is its corresponding labels. Here, $ \bm y_i = [y_i^1, y_i^2, \dots, y_i^c] $ where $ y_i^j=1 $ indicates that the $ j $-th label is a relevant label associated with $ \bm x_i $ and $ y_i^j=0 $ indicates that the $ j $-th label is irrelevant to $ \bm x_i $.
For single-positive multi-label learning (SPMLL), each instance is annotated with only one positive label. Given the SPMLL training set $ \tilde{\mathcal{D}}=\{(\bm x_i, \gamma_i) \vert 1\leq i\leq n\} $ where $ \bm x_i \in\mathcal{X} $ is a $ q $-dimensional instance and $ \gamma_i \in \{1, 2, \dots, c \} $ denotes the only observed single positive label of $ \bm x_i $. For each SPMLL training example $(\bm{x}_i, \gamma_i)$, we use the observed single-positive label vector $\bm{l}_i=[l_i^{1},l_i^{2},\ldots,l_i^{c}]^\top \in \{0,1\}^c$ to represent whether $j$-th label is the observed positive label, i.e.,  $l_i^{j} = 1$ if $j=\gamma_i $, otherwise $l_i^{j} = 0$.
The task of SPMLL is to induce a multi-label classifier in the hypothesis space $ h \in \mathcal{H}:\mathcal{X}\mapsto\mathcal{Y} $ that minimizes the following classification risk:
\begin{equation}
    \mathcal{R}(h)=\mathbb{E}_{(\bm x,\bm y)\sim p(\bm x, \bm y)}\left [ \mathcal{L}(h(\bm x), \bm y) \right ],
\end{equation}
where $ \mathcal{L}: \mathcal{X}\times\mathcal{Y}\mapsto\mathbb{R}^{+} $ is a multi-label loss function that measures the accuracy of the model in fitting the data. Note that a method is risk-consistent if the method possesses a classification risk estimator that is equivalent to $ \mathcal{R}(h) $ given the same classifier \cite{Foundations}.

\section{The Proposed Method}

\subsection{The {\proposed} Algorithm}

\begin{algorithm}[t]
  \caption{{\proposed} Algorithm}
  \label{alg:class_prior_estimator}
  \begin{algorithmic}[1]
  \REQUIRE The SPMLL training set $ \tilde{\mathcal{D}}=\{(\bm x_i, \gamma_i) \vert 1\leq i\leq n\} $, the multi-label classifier $ f $, the number of epoch $ T $, hyperparameters $ \delta>0, \tau\leq 1 $ and $ \lambda $ ;
  \STATE Warm up the model $ f $ with AN strategy \cite{cole2021multi} (Assume the unobserved labels as negative ones).
  \FOR{$t=1$ {\bfseries to} $T$}
  \FOR{$j=1$ {\bfseries to} $c$}
  \STATE Extract the positive-labeled samples set $ \mathcal S_{L_j} = \{ \bm x_i: l_i^j = 1, 1\leq i \leq n \} $.
  \STATE Estimate $ \hat{q}_j(z) = \frac{1}{n}\sum_{i=1}^{n}\mathbf{1}(f^j(\bm x_i)\geq z)$ and $ \hat{q}_j^p(z) = \frac{1}{n_j^p}\sum_{\bm x \in \mathcal S_{L_j}}\mathbf{1}(f^j(\bm x)\geq z) $ for all $ z\in [0,1] $. 
  \STATE Estimate the class-prior of $ j $-th label by $ \hat\pi_j = \frac{\hat q_j(\hat z)}{\hat q_j^p(\hat z)} $ with the threshold induced by Eq. (\ref{eq:threshold}).
  \ENDFOR
  \STATE Update the model $ f $ by forward computation and back-propagation by Eq. (\ref{eq:empirical_risk_estimator_bias}) using the estimated class-priors.
  \ENDFOR
  \ENSURE The predictive model $ f $.
  \end{algorithmic}
\end{algorithm}

In this section, we introduce our novel framework, {\proposed}, i.e., Class-pRiors Induced Single-Positive multi-label learning. This framework alternates between estimating class-priors and optimizing an unbiased risk estimator under the guidance of the estimated class-priors.

Firstly, we introduce the class-priors estimator for SPMLL, leveraging the blackbox classifier $ f $ to estimate the class-prior of each label. The class-priors estimator exploits the classifier $ f $ to give each input a score, indicating the likelihood of it belonging to a positive sample of $ j $-th label. 

Motivated by the definition of top bin in learning from positive and unlabeled data (PU learning)  \cite{GargWSBL21}, for a given probability density function $ p(\bm x) $ and a classifier $ f $, define the threshold cumulative density function $ q_j(z)=\int_{S_z}p(\bm x) d\bm x $ where $ S_z=\{\bm x \in\mathcal{X}:f^j(\bm x)\geq z\} $ for all $ z\in[0, 1] $. $ q_j(z) $ captures the cumulative density of the feature points which are assigned a value larger than a threshold $ z $ by the classifier of the $ j $-th label. We now define an empirical estimator of $ q_j(z) $ as $ \hat q_j(z)=\frac{1}{n}\sum_{i=1}^{n}\mathbf{1}(f^j(\bm x_i)\geq z) $ where $ \mathbf{1}(\cdot) $ is the indicator function. For each probability density function $ p_j^p=p(\bm x\vert y_j=1) $ and $ p_j^n=p(\bm x\vert y_j=0) $, we define $ q_j^p =\int_{S_z} p(\bm x \vert y_j=1) d\bm x $ and $ q_j^n =\int_{S_z}p(\bm x \vert y_j=0) d\bm x $ respectively.

The steps involved in the procedure are as follows: Firstly, for each label, we extract a positive-labeled samples set $ \mathcal S_{L_j} = \{ \bm x_i: l_i^j = 1, 1\leq i \leq n \} $ from the entire dataset. Next, with $ \mathcal S_{L_j} $, 
we estimate the fraction of the total number of samples that receive scores above the threshold $ \hat q_j(z) = \frac{1}{n}\sum_{i=1}^{n}\mathbf{1}(f^j(\bm x_i)\geq z) $ and that of positive labeled samples receiving scores above the threshold $ \hat q_j^p(z) = \frac{1}{n_j^p}\sum_{\bm x \in \mathcal S_{L_j}}\mathbf{1}(f^j(\bm x)\geq z) $ for all $ z\in[0, 1] $,
where $ n_j^p = \vert \mathcal S_{L_j} \vert $ is the cardinality of the positive-labeled samples set of $ j $-th label.
Finally, the class-prior of $ j $-th label is estimated by $ \frac{\hat q_j(\hat z)}{\hat q_j^p(\hat z)} $ at $ \hat z $ that minimizes the upper confidence bound defined in Theorem \ref{thm:cpe}.

\begin{theorem}\label{thm:cpe}
  
  Define $ z^\star = \arg\min_{z\in[0,1]} q_j^n(z) / q_j^p(z) $, for every $ 0<\delta<1 $, define $ \hat{z} = \arg\min_{z\in [0,1]} \left( \frac{\hat q_j(z)}{\hat q_j^p(z)} + 
  \frac{1+\tau}{\hat q_j^p(z)}\left( \sqrt{\frac{\log(4/\delta)}{2n}} + \sqrt{\frac{\log(4/\delta)}{2n_j^p}} \right) \right) $. Assume $ n_j^p \geq 2\frac{\log4/\delta}{q_j^p(z^\star)} $, the estimated class-prior $ \hat\pi_j=\frac{\hat q_j(\hat z)}{\hat q_j^p(\hat z)} $ satisfies with probability at least $ 1-\delta $:
  \begin{equation*}
    \begin{aligned}
      \pi_j - \frac{c_1}{q_j^p(z^\star)}\left( \sqrt{\frac{\log(4/\delta)}{2n}} + \sqrt{\frac{\log(4/\delta)}{2n_p}} \right) \leq & \hat\pi_j \leq \pi_j + (1-\pi_j)\frac{q_j^n(z^\star)}{q_j^p(z^\star)} \\
      & + \frac{c_2}{q_j^p(z^\star)}\left( \sqrt{\frac{\log(4/\delta)}{2n}} + \sqrt{\frac{\log(4/\delta)}{2n_p}} \right),
    \end{aligned}
  \end{equation*}
\end{theorem}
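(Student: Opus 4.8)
The plan is to follow the ``best bin'' mixture‑proportion estimation strategy from PU learning \cite{GargWSBL21}, specialized to the per‑label quantities here. The organizing identity is $q_j(z)=\pi_j q_j^p(z)+(1-\pi_j)q_j^n(z)$, so $q_j(z)/q_j^p(z)=\pi_j+(1-\pi_j)\,q_j^n(z)/q_j^p(z)$; since this is an increasing affine function of $q_j^n(z)/q_j^p(z)$, the threshold $z^\star$ also minimizes $q_j(z)/q_j^p(z)$, with minimum value $m:=\pi_j+(1-\pi_j)\,q_j^n(z^\star)/q_j^p(z^\star)$ --- exactly the quantity in the upper bound. Write $\epsilon_n=\sqrt{\log(4/\delta)/(2n)}$ and $\epsilon_p=\sqrt{\log(4/\delta)/(2n_j^p)}$, and note $n_j^p\le n$, so $\epsilon_n\le\epsilon_p$.

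First I would establish uniform concentration. Because $\{S_z\}_{z\in[0,1]}$ is a nested family (equivalently the maps $z\mapsto q_j(z)$ and $z\mapsto q_j^p(z)$ are monotone), the DKW inequality applies to each empirical process: treating the examples in $\mathcal S_{L_j}$ as i.i.d.\ draws from $p(\bm x\mid y_j=1)$, it gives $\sup_z|\hat q_j(z)-q_j(z)|\le\epsilon_n$ with probability $\ge 1-\delta/2$ and $\sup_z|\hat q_j^p(z)-q_j^p(z)|\le\epsilon_p$ with probability $\ge 1-\delta/2$; a union bound puts us on a good event of probability $\ge 1-\delta$, on which I work from here on. The sample‑size hypothesis is used to keep the relevant denominators away from $0$, so that $\hat q_j^p(z^\star)\ge q_j^p(z^\star)-\epsilon_p$ is a constant fraction of $q_j^p(z^\star)$ and additive sampling errors can be turned into multiplicative ones. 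The upper bound then follows quickly: by optimality of $\hat z$ and nonnegativity of the confidence term, $\hat\pi_j\le \hat q_j(z^\star)/\hat q_j^p(z^\star)+\tfrac{1+\tau}{\hat q_j^p(z^\star)}(\epsilon_n+\epsilon_p)$; substituting $\hat q_j(z^\star)\le q_j(z^\star)+\epsilon_n=m\,q_j^p(z^\star)+\epsilon_n$ together with the lower bound on $\hat q_j^p(z^\star)$, and folding the $O\bigl((\epsilon_n+\epsilon_p)/q_j^p(z^\star)\bigr)$ remainder into a constant $c_2$, gives $\hat\pi_j\le m+\tfrac{c_2}{q_j^p(z^\star)}(\epsilon_n+\epsilon_p)$.

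The lower bound is the crux, and it is precisely what the hyperparameter $\tau>0$ is there to make work: I must prevent $\hat q_j^p(\hat z)$ from being much smaller than $q_j^p(z^\star)$, or else the confidence term at $\hat z$ is uncontrolled and the bound is vacuous. The key step I would prove is that, on the good event, the penalized objective $\hat g(z):=\bigl(\hat q_j(z)+(1+\tau)(\epsilon_n+\epsilon_p)\bigr)/\hat q_j^p(z)$ satisfies $\hat g(z)\ge m+\tau(\epsilon_n+\epsilon_p)/\hat q_j^p(z)$ for every $z$: from $\hat q_j(z)\ge q_j(z)-\epsilon_n\ge m\,q_j^p(z)-\epsilon_n\ge m\bigl(\hat q_j^p(z)-\epsilon_p\bigr)-\epsilon_n$, the numerator of $\hat g(z)$ is at least $m\,\hat q_j^p(z)+\tau\epsilon_n+(1+\tau-m)\epsilon_p\ge m\,\hat q_j^p(z)+\tau(\epsilon_n+\epsilon_p)$, using $m\le 1$. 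Evaluating at $\hat z$ and combining with $\hat g(\hat z)\le\hat g(z^\star)\le m+\tfrac{c_3}{q_j^p(z^\star)}(\epsilon_n+\epsilon_p)$ (this last inequality by the same computation as in the upper bound) forces $\hat q_j^p(\hat z)\ge\tfrac{\tau}{c_3}\,q_j^p(z^\star)$. Finally, dropping the nonnegative term $(1-\pi_j)q_j^n(\hat z)$ and applying the concentration bounds once more, $\hat\pi_j=\hat q_j(\hat z)/\hat q_j^p(\hat z)\ge\pi_j-(\epsilon_n+\pi_j\epsilon_p)/\hat q_j^p(\hat z)\ge\pi_j-\tfrac{c_3/\tau}{q_j^p(z^\star)}(\epsilon_n+\epsilon_p)$, so one may take $c_1=c_3/\tau$.

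I expect this lower‑bound step to be the main obstacle --- in particular, noticing that the $(1+\tau)$‑scaled confidence term leaves a $\tau(\epsilon_n+\epsilon_p)$ ``surplus'' after absorbing the $-\epsilon_n$ sampling error, which is exactly what ties $\hat q_j^p(\hat z)$ to $q_j^p(z^\star)$ up to a constant factor. The remaining ingredients (the DKW bound, the mixture identity $q_j=\pi_j q_j^p+(1-\pi_j)q_j^n$, and the additive‑to‑multiplicative error conversions) are routine bookkeeping. A minor technical point to dispose of is that $\arg\min_{z\in[0,1]}$ is effectively a minimum over the finitely many distinct values of $f^j$ on the sample, so existence and measurability of $\hat z$ are not at issue.
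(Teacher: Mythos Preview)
Your proposal is correct and follows the same strategy as the paper's proof: DKW concentration on both $\hat q_j$ and $\hat q_j^p$, the mixture identity $q_j=\pi_j q_j^p+(1-\pi_j)q_j^n$, the upper bound via optimality of $\hat z$ evaluated at $z^\star$, and the lower bound by first showing that $\hat q_j^p(\hat z)$ is at least a $\tau$-dependent constant multiple of $q_j^p(z^\star)$. The only cosmetic difference is in that last step: the paper introduces an auxiliary threshold $z'$ with $\hat q_j^p(z')=\tfrac{\tau}{2+\tau}\hat q_j^p(z^\star)$ and argues that the penalized objective at any $z$ with $\hat q_j^p(z)\le \hat q_j^p(z')$ already exceeds its value at $z^\star$ (hence $\hat q_j^p(\hat z)\ge \tfrac{\tau}{2+\tau}\hat q_j^p(z^\star)$), whereas your ``surplus'' inequality $\hat g(z)\ge m+\tau(\epsilon_n+\epsilon_p)/\hat q_j^p(z)$ combined with $\hat g(\hat z)\le \hat g(z^\star)$ reaches the same conclusion more directly; the resulting constants agree up to harmless factors.
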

where $ c_1, c_2 \geq 0 $ are constants and $ \tau $ is a fixed parameter ranging in $ (0, 1) $. The proof can be found in Appendix \ref{app:proof_cpe}. 
Theorem \ref{thm:cpe} provides a principle for finding the optimal threshold. Under the condition that the threshold $ \hat z $ satisfies:
\begin{equation}\label{eq:threshold}
  \begin{aligned}
    \hat{z} = \arg\min_{z\in [0,1]} \Bigg( \frac{\hat q_j(z)}{\hat q_j^p(z)} + 
    \frac{1+\tau}{\hat q_j^p(z)}\bigg( \sqrt{\frac{\log(4/\delta)}{2n}} 
    + \sqrt{\frac{\log(4/\delta)}{2n_j^p}} \bigg) \Bigg),
  \end{aligned}
\end{equation}
with enough training samples, the estimated class-prior $ \hat\pi_j $  of $ j $-th category will converge to the ground-truth class-prior with a tolerable error bounded by $ \frac{q_j^n(z^\star)}{q_j^p(z^\star)} $  in the upper bound, which is determined by the current classifier's capability. If the classifier can more accurately classify negative samples, making $q_j^n(z^\star)$ smaller, and simultaneously better classify positive samples, making $q_j^p(z^\star)$ larger, then the overall error term $\frac{q_j^n(z^\star)}{q_j^p(z^\star)}$ will become smaller.
Practically, to determine the optimal threshold in Eq. (\ref{eq:threshold}), we conduct an exhaustive search across the set of outputs generated by the function $ f^j $ for each class. The details can be found in Appendix \ref{app:threshold}.


After obtaining an accurate estimate of class-prior for each category, we proceed to utilize these estimates as a form of supervision to guide the training of our model. 
Firstly, the classification risk $ \mathcal{R}(f) $ on fully supervised information can be written as \footnote{The datail is provided in Appendix \ref{app:proof_risk}.}:
\begin{equation}\label{eq:full_label_loss_func}
  \begin{aligned}
    \mathcal{R}(f) &= \mathbb{E}_{(\bm x, \bm y)\sim p(\bm x, \bm y)} \left[ \mathcal{L}(f(\bm x), \bm y) \right] 
    = \sum_{\bm y} p(\bm y) \mathbb{E}_{\bm x \sim p(\bm x \vert \bm y)} \left[ \mathcal{L}(f(\bm x), \bm y) \right].
  \end{aligned}
\end{equation}

In Eq. (\ref{eq:full_label_loss_func}), the loss function $ \mathcal{L}(f(\bm x), \bm y) $ is calculated for each label separately, which is a commonly used approach in multi-label learning:
\begin{equation}\label{eq:loss_func}
  \begin{aligned}
    \mathcal{L}(f(\bm x), \bm y) = &\sum_{j=1}^c y_j\ell(f^j(\bm x), 1) + (1 - y_j)\ell(f^j(\bm x), 0).
  \end{aligned}
\end{equation}

By substituting Eq. (\ref{eq:loss_func}) into Eq. (\ref{eq:full_label_loss_func}), the classification risk $ \mathcal{R}(f) $ can be written as follows with the absolute loss function\footnote{The detail is provided in Appendix \ref{app:proof_risk2}.}:
\begin{equation}\label{eq:full_label_loss_func2}
  \begin{aligned}
    \mathcal{R}(f) =& \sum_{j=1}^c 2p(y_j = 1) \mathbb{E}_{\bm x \sim p(\bm x \vert y_j = 1)}\left[ 1 - f^j(\bm x) \right]
    + \left(\mathbb{E}_{\bm x \sim p(\bm x)}\left[f^j(\bm x)\right] - p(y_j = 1)\right).
  \end{aligned}
\end{equation}
The rewritten classification risk comprises two distinct components. The first component computes the risk solely for the positively labeled samples, and the second component leverages the unlabeled data to estimate difference between the expected output of the model $f$ and the class-prior $\pi_j=p(y_j=1)$ to align the expected class-prior outputted by model with the ground-truth class-prior.

During the training process, the prediction of model can be unstable due to insufficiently labeled data. This instability may cause a large divergence between the expected class-prior $ \mathbb{E}[f^j(\bm x)] $ and the ground-truth class-prior $\pi_j$, even leading to a situation where the difference between $ \mathbb{E}[f^j(\bm x)] $ and $\pi_j$ turns negative \cite{zhao2022dist}. To ensure non-negativity of the classification risk and the alignment of class-priors, absolute function is added to the second term. Then the risk estimator can be written as:
\begin{equation}
  \begin{aligned}
    \mathcal{R}_{sp}(f) =& \sum_{j=1}^c 2\pi_j \mathbb{E}_{\bm x \sim p(\bm x \vert y_j = 1)}\left[ 1 - f^j(\bm x) \right]
    + \biggl| \mathbb{E}_{\bm x \sim p(\bm x)}\left[f^j(\bm x)\right] - \pi_j \biggr|.
  \end{aligned}
\end{equation}

Therefore, we could express the empirical risk estimator via:
\begin{equation}\label{eq:empirical_risk_estimator}
  \begin{aligned}
    \widehat{\mathcal{R}}_{sp}(f) =& \sum_{j=1}^c \frac{2\pi_j}{\vert \mathcal S_{L_j} \vert} \sum_{\bm x \in \mathcal S_{L_j}} \left(1 - f^j(\bm x)\right)
    + \biggl| \frac{1}{n} \sum_{\bm x \in \tilde{\mathcal{D}}}\left(f^j(\bm x) - \pi_j\right)\biggr|.
  \end{aligned}
\end{equation}

In MLL datasets, where the number of negative samples for each label significantly exceeds that of positive samples, there is a tendency for the decision boundary to be biased towards the center of  positive samples,  especially for rare classes. This bias is further exacerbated in SPMLL due to the common strategy of assuming unobserved labels as negative \cite{cole2021multi, xu2022one, xielabel} to warm up the model. To alleviate the issue, we propose a modification of Eq. (\ref{eq:empirical_risk_estimator}):
\begin{equation}\label{eq:empirical_risk_estimator_bias}
  \begin{aligned}
    \widehat{\mathcal{R}}_{sp}(f) =& \sum_{j=1}^c \frac{2\pi_j}{\vert \mathcal S_{L_j} \vert} \sum_{\bm x \in \mathcal S_{L_j}} \left(1 - \frac{e^{g^j(\bm x)-\lambda b^j}}{e^{g^j(\bm x)-\lambda b^j}+1}\right)
    + \biggl| \frac{1}{n} \sum_{\bm x \in \tilde{\mathcal{D}}}\left(f^j(\bm x) - \pi_j\right)\biggr|,
  \end{aligned}
\end{equation}
where  $ b^j=1-\pi^j $, $\lambda$ is a hyper-parameter, $g^j(\bm x)$ denotes the logit of $j$-th label outputted by the model for instance $\bm x$ and $f^j(\bm x)=\sigma(g^j(\bm x))$ with $\sigma(\cdot)$ representing the sigmoid function. The model tends to produce a lower probility for $ j $-th label when $ \pi^j $ is smaller, then, Eq. (\ref{eq:empirical_risk_estimator_bias}) introduces a larger bias $ \lambda b^j $ to encourage the model to yield a higher output for positive samples, which modifies the decision boundary towards the center of the negative samples.

The algorithmic description of {\proposed} is shown in Algorithm \ref{alg:class_prior_estimator}.

\subsection{Estimation Error Bound}
In this subsection, an estimation error bound is established for Eq. (\ref{eq:empirical_risk_estimator}) to demonstrate its learning consistency. 
Firstly, we define the function spaces as:
\begin{equation*}
  \begin{aligned}
    \mathcal{G}_{sp}^L &= \Big\{(\bm x, \bm l)\mapsto\sum_{j=1}^c 2\pi_j l_j \left(1 - f^j(\bm x)\right) \vert f \in \mathcal{F} \Big\},
    \mathcal{G}_{sp}^U &= \Big\{\bm x\mapsto\sum_{j=1}^c \left(f^j(\bm x) - \pi_j \right) \vert f \in \mathcal{F} \Big\},
  \end{aligned}
\end{equation*}
and denote the expected Rademacher complexity \cite{Foundations} of the function spaces as:
\begin{equation*}
  \begin{aligned}
    \widetilde{\mathfrak{R}}_n\left(\mathcal{G}_{sp}^L\right)&=\mathbb{E}_{\bm{x}, \bm y, \bm{\sigma}}\left[\sup _{g \in \mathcal{G}_{sp}^L} \sum_{i=1}^n \sigma_i g\left(\bm{x}_i, \bm l_i\right)\right],
    \widetilde{\mathfrak{R}}_n\left(\mathcal{G}_{sp}^U \right)&=\mathbb{E}_{\bm{x}, \bm{\sigma}}\left[\sup _{g \in \mathcal{G}_{sp}^U} \sum_{i=1}^n \sigma_i g\left(\bm{x}_i\right)\right],
  \end{aligned}
\end{equation*}
where $ \bm\sigma=\left\{ \sigma_1, \sigma_2, \cdots, \sigma_n \right\} $ is $ n $ Rademacher variables with $ \sigma_i $ independently uniform variable taking value in $ \{+1,-1\} $. 
Then we have:
\begin{theorem}\label{theorem:theorem1}
  Assume the loss function $ \mathcal{L}_{sp}^L = \sum_{j=1}^c 2\pi_j l_j \left(1 - f^j(\bm x)\right) $ and $ \mathcal{L}_{sp}^U = \sum_{j=1}^c \left(f^j(\bm x) - \pi_j \right) $  could be bounded by $ M $, i.e., $ M=\sup_{\bm x\in \mathcal{X}, f\in\mathcal{F}, \bm y\in \mathcal{Y}} \max(\mathcal{L}_{sp}^L(f(\bm x), \bm l), \mathcal{L}_{sp}^U(f(\bm x))) $, with probability at least $ 1-\delta $, we have:
  \begin{equation*}
    \begin{aligned}
      \mathcal{R}(\hat{f}_{sp}) - \mathcal{R}(f^\star) \leq 
      &\frac{4\sqrt{2}\rho}{C} \sum_{j=1}^{c}\mathfrak{R}_n(\mathcal{H}_j) + \frac{M}{\min_{j}\vert \mathcal S_{L_j} \vert} \sqrt{\frac{\log\frac{4}{\delta}}{2n}} 
      + 4\sqrt{2} \sum_{j=1}^{c}\mathfrak{R}_n(\mathcal{H}_j) + M \sqrt{\frac{\log\frac{4}{\delta}}{2n}}.
    \end{aligned}
  \end{equation*}
\end{theorem}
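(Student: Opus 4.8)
The plan is to bound the excess risk $\mathcal{R}(\hat f_{sp}) - \mathcal{R}(f^\star)$ via the standard uniform-deviation decomposition, treating the labeled term and the unlabeled term of $\widehat{\mathcal{R}}_{sp}$ separately because they involve different sample sets ($\mathcal S_{L_j}$ versus the whole of $\tilde{\mathcal D}$) and different Rademacher complexities. First I would note that, by the consistency of the risk rewriting (Eq.~\eqref{eq:full_label_loss_func2}), we have $\mathcal{R}(f) = \mathcal{R}_{sp}(f)$ for all $f\in\mathcal F$, so it suffices to control $\mathcal{R}_{sp}(\hat f_{sp}) - \mathcal{R}_{sp}(f^\star)$. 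Then I would insert $\widehat{\mathcal{R}}_{sp}(\hat f_{sp})$ and $\widehat{\mathcal{R}}_{sp}(f^\star)$ and use optimality of $\hat f_{sp}$ ($\widehat{\mathcal{R}}_{sp}(\hat f_{sp}) \le \widehat{\mathcal{R}}_{sp}(f^\star)$) to obtain the usual bound $\mathcal{R}(\hat f_{sp}) - \mathcal{R}(f^\star) \le 2\sup_{f\in\mathcal F}|\widehat{\mathcal{R}}_{sp}(f) - \mathcal{R}_{sp}(f)|$, and further split this supremum into a labeled part $\sup_f |\sum_j \frac{2\pi_j}{|\mathcal S_{L_j}|}\sum_{\bm x\in\mathcal S_{L_j}}(1-f^j(\bm x)) - \sum_j 2\pi_j\mathbb{E}_{p(\bm x|y_j=1)}[1-f^j(\bm x)]|$ and an unlabeled part $\sup_f |\,|\frac1n\sum_{\bm x}(f^j(\bm x)-\pi_j)| - |\mathbb{E}_{p(\bm x)}(f^j(\bm x)-\pi_j)|\,|$ summed over $j$, using the triangle inequality $||a|-|b||\le|a-b|$ to strip the absolute values in the unlabeled term.

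Next I would apply McDiarmid's inequality to each of the two supremum terms. For the unlabeled term the bounded-difference constant is $M/n$, giving the $M\sqrt{\log(4/\delta)/(2n)}$ contribution; for the labeled term, since changing one sample can only affect it through some $\mathcal S_{L_j}$ of size at least $\min_j|\mathcal S_{L_j}|$, the bounded-difference constant is $M/\min_j|\mathcal S_{L_j}|$, giving the $\frac{M}{\min_j|\mathcal S_{L_j}|}\sqrt{\log(4/\delta)/(2n)}$ contribution. The factor of $4$ inside the logs (rather than $2$) comes from splitting the failure probability $\delta$ across the two McDiarmid applications. After that, I would bound the expectations of the two suprema by their Rademacher complexities via the standard symmetrization lemma: $\mathbb{E}[\sup|\cdot|] \le 2\widetilde{\mathfrak R}_n(\mathcal G_{sp}^U)/n$ and similarly $2\widetilde{\mathfrak R}_n(\mathcal G_{sp}^L)/(n\cdot\text{(appropriate normalization)})$, being careful that $\mathcal G_{sp}^L$ is defined with the indicator $l_j$ so the empirical average over $\mathcal S_{L_j}$ is correctly represented.

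Then I would decompose $\widetilde{\mathfrak R}_n(\mathcal G_{sp}^U)$ and $\widetilde{\mathfrak R}_n(\mathcal G_{sp}^L)$ into per-label pieces using the subadditivity of Rademacher complexity over sums, pulling out the constants $2\pi_j \le 2$ and peeling off the affine maps $u\mapsto 1-u$ and $u\mapsto u-\pi_j$ (which are $1$-Lipschitz, so by Talagrand's contraction lemma they do not inflate the complexity), reducing everything to $\mathfrak R_n(\mathcal H_j)$ for the base hypothesis class of the $j$-th coordinate. Collecting the constants produces the $4\sqrt2\sum_j\mathfrak R_n(\mathcal H_j)$ term from the unlabeled part and the $\frac{4\sqrt2\rho}{C}\sum_j\mathfrak R_n(\mathcal H_j)$ term from the labeled part, where $\rho$ and $C$ presumably absorb the $\pi_j$ factors and the $|\mathcal S_{L_j}|/n$ ratios (this bookkeeping is where I would be most careful to match the paper's stated constants). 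The main obstacle I anticipate is handling the labeled term cleanly: the empirical average is over the random-sized set $\mathcal S_{L_j}$ rather than a fixed i.i.d.\ sample, so I would either condition on the sizes $n_j^p = |\mathcal S_{L_j}|$ and argue the samples in $\mathcal S_{L_j}$ are i.i.d.\ from $p(\bm x\mid y_j=1)$, or reformulate the labeled empirical average as a normalized sum over all $n$ samples weighted by $l_i^j$ (matching the definition of $\mathcal G_{sp}^L$) and push the $1/|\mathcal S_{L_j}|$ normalization through the concentration and symmetrization steps, which is exactly what forces the $\min_j|\mathcal S_{L_j}|$ in the denominator.
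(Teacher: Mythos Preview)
Your proposal is correct and follows essentially the same route as the paper: the standard excess-risk decomposition $\mathcal{R}(\hat f_{sp})-\mathcal{R}(f^\star)\le 2\sup_f|\widehat{\mathcal R}_{sp}(f)-\mathcal R_{sp}(f)|$, split into the labeled and unlabeled pieces, McDiarmid on each (yielding the two concentration terms with the $\min_j|\mathcal S_{L_j}|$ denominator for the labeled part), symmetrization to $\widetilde{\mathfrak R}_n(\mathcal G_{sp}^L)$ and $\widetilde{\mathfrak R}_n(\mathcal G_{sp}^U)$, and finally a vector contraction inequality to reduce to $\sum_j\mathfrak R_n(\mathcal H_j)$. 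Your anticipated handling of the random-sized $\mathcal S_{L_j}$ by rewriting the labeled average as a $l_i^j$-weighted sum over all $n$ samples is exactly what the paper does (this is where $C=\min_j\mathbb E[\sum_i l_i^j]$ enters), and $\rho=\max_j 2\pi_j$ indeed absorbs the $\pi_j$ factors as you guessed; the $\sqrt 2$ comes from the vector-valued contraction step rather than scalar Talagrand, but you already tracked it in your final constants.
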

where $ C $ is a constant, $ \hat{f}_{sp}=\min_{f\in\mathcal{F}}\widehat{\mathcal{R}}_{sp}(f) $, $ f^\star=\min_{f\in\mathcal{F}}\mathcal{R}(f) $ are the empirical risk minimizer and the true risk minimizer respectively and $ \rho = \max_j 2\pi_j $, $ \mathcal{H}_j = \left\{ h: \bm x \mapsto f^j(\bm x) \vert f\in \mathcal{F} \right\} $ and $ \mathfrak{R}_n\left(\mathcal{H}_j\right)=\mathbb{E}_{p(\bm{x})} \mathbb{E}_{\bm{\sigma}}\left[\sup _{h \in \mathcal{H}_j} \frac{1}{n} \sum_{i=1}^n h\left(\bm{x}_i\right)\right] $. The proof can be found in Appendix \ref{app:proof_error_bound}. 

Theorem \ref{theorem:theorem1} shows that, as $ n\rightarrow\infty $, $ \hat{f}_{sp} $ would converge to $ f^\star $ with an intrinsic error quantified by the Rademacher complexity terms, reflecting the complexity of the hypothesis space. Note that the error is a fundamental aspect of the learning problem and remains even in a fully supervised scenario \cite{Foundations}.

\section{Experiments}
\subsection{Experimental Configurations}
{\bf Datasets. } In the experimental section, our proposed method is evaluated on four large-scale multi-label image classification (MLIC) datasets and six widely-used multi-label learning (MLL) datasets.
The four MLIC datasets include \texttt{PSACAL VOC 2021 (VOC)} \cite{EveringhamGWWZ10}, \texttt{MS-COCO 2014 (COCO)} \cite{LinMBHPRDZ14}, \texttt{NUS-WIDE (NUS)} \cite{ChuaTHLLZ09}, and \texttt{CUB-200 2011 (CUB)} \cite{wah2011caltech}; the MLL datasets cover a wide range of scenarios with various multi-label characteristics. For each MLIC dataset, $  20 \% $  of the training set is withheld for validation. Each MLL dataset is partitioned into train/validation/test sets at a ratio of $ 80\%/10\%/10\% $. One positive label is randomly selected for each training instance, while the validation and test sets remain fully labeled. Detailed information regarding these datasets can be found in Appendix \ref{DS_details}. \textit{Mean average precision (mAP)} is utilized for the four MLIC datasets \cite{cole2021multi, xielabel, zhou2022acknowledging} and five popular multi-label metrics are adopted for the MLL datasets including \textit{Ranking loss, Hamming loss, One-error, Coverage} and \textit{Average precision} \cite{xu2022one}.

{\bf Comparing methods. } In this paper, {\proposed} is compared against several state-of-the-art SPMLL approaches including: 
\begin{enumerate*}[1)]
    \item {\an} \cite{cole2021multi} assumes that the unannotated labels are negative and uses binary cross entropy loss for training.
    \item {\anls} \cite{cole2021multi} assumes that the unannotated labels are negative and reduces the impact of the false negative labels by label smoothing.
    \item {\wan} \cite{cole2021multi} introduces a weight parameter to down-weight losses in relation to negative labels.
    \item {\epr} \cite{cole2021multi} utilizes a regularization to constrain the number of predicted positive labels.
    \item {\role} \cite{cole2021multi} online estimates the unannotated labels as learnable parameters based on {\epr} with the trick of linear initial.
    \item {\emm} \cite{zhou2022acknowledging} reduces the effect of the incorrect labels by the entropy-maximization loss.
    \item {\emapl}  \cite{zhou2022acknowledging} adopts asymmetric-tolerance pseudo-label strategies cooperating with entropy-maximization loss and then more precise supervision can be provided.
    \item {\lagc} \cite{xielabel} designs a label-aware global consistency regularization to recover the pseudo-labels leveraging the manifold structure information learned by contrastive learning with data augmentation techniques.
    \item {\smilee} \cite{xu2022one} recovers the latent soft labels in a label enhancement process to train the multi-label classifier with the proposed consistency risk minimizer.
    \item {\smib} \citep{liu2023revisiting} generates pseudo-labels based on estimated mutual information which can simultaneously train the model and update pseudo-labels in a label enhancement process.
  \end{enumerate*}
Additionally, since the SPMLL problem is an extreme case of the MLML problem, we employ a state-of-the-art MLML methods as comparative methods:
\begin{enumerate*}[1)]
    \item {\lln} \cite{kim2022large} treats unobserved labels as noisy labels and dynamically adjusts the threshold to reject or correct samples with a large loss, including three variants {\llr}, {\llct} and {\llcp}.
    \item {\boost} \cite{KimKJSA023} apply a BoostLU function to the CAM output of the model to boost the scores of the highlighted regions. It is integrated with {\lln}.
\end{enumerate*}
The implementation details are provided in Appendix \ref{implementation}.

\begin{table*}[t]
  \caption{Predictive performance of each comparing method on four MLIC datasets in terms of \textit{mean average precision (mAP)} (mean $ \pm $ std). The best performance is highlighted in bold (the larger the better).}
  \label{mAP}
  \centering
  \resizebox{0.9\linewidth}{!}{
  \begin{tabular}{ccccc}
  \toprule
                              & VOC          & COCO         & NUS          & CUB          \\ \midrule
  {\an}                       & 85.546$\pm$0.294 & 64.326$\pm$0.204 & 42.494$\pm$0.338 & 18.656$\pm$0.090 \\
  {\anls}                     & 87.548$\pm$0.137 & 67.074$\pm$0.196 & 43.616$\pm$0.342 & 16.446$\pm$0.269 \\
  {\wan}                      & 87.138$\pm$0.240 & 65.552$\pm$0.171 & 45.785$\pm$0.192 & 14.622$\pm$1.300 \\
  {\epr}                      & 85.228$\pm$0.444 & 63.604$\pm$0.249 & 45.240$\pm$0.338 & 19.842$\pm$0.423 \\
  {\role}                     & 88.088$\pm$0.167 & 67.022$\pm$0.141 & 41.949$\pm$0.205 & 14.798$\pm$0.613 \\
  {\emm}                      & 88.674$\pm$0.077 & 70.636$\pm$0.094 & 47.254$\pm$0.297 & 20.692$\pm$0.527 \\
  {\emapl}                    & 88.860$\pm$0.080 & 70.758$\pm$0.215 & 47.778$\pm$0.181 & 21.202$\pm$0.792 \\
  {\smilee}                   & 87.314$\pm$0.150 & 70.431$\pm$0.213 & 47.241$\pm$0.172 & 18.611$\pm$0.144 \\
  {\lagc}                     & 88.021$\pm$0.121 & 70.422$\pm$0.062 & 46.211$\pm$0.155 & 21.840$\pm$0.237 \\ 
  {\smib}                     & 89.199$\pm$0.157 & 72.920$\pm$0.255 & 48.743$\pm$0.428 & 21.890$\pm$0.347 \\   \midrule
  {\llr}                      & 87.784$\pm$0.063 & 70.078$\pm$0.008 & 48.048$\pm$0.074 & 18.966$\pm$0.022 \\
  {\llcp}                     & 87.466$\pm$0.031 & 70.460$\pm$0.032 & 48.000$\pm$0.077 & 19.310$\pm$0.164 \\
  {\llct}                     & 87.054$\pm$0.214 & 70.384$\pm$0.058 & 47.930$\pm$0.010 & 19.012$\pm$0.097 \\
  {\boost}+{\llr}             & 89.224$\pm$0.017 & 73.272$\pm$0.006 & 49.590$\pm$0.021 & 19.136$\pm$0.009 \\
  {\boost}+{\llcp}            & 88.358$\pm$0.212 & 70.820$\pm$0.030 & 47.810$\pm$0.166 & 18.166$\pm$0.063 \\
  {\boost}+{\llct}            & 88.528$\pm$0.053 & 71.742$\pm$0.006 & 48.216$\pm$0.021 & 17.952$\pm$0.007 \\ \midrule
  {\proposed}                 & \textbf{89.931$\pm$0.014} & \textbf{74.913$\pm$0.235} & \textbf{50.045$\pm$0.112} & \textbf{22.150$\pm$0.028} \\ \bottomrule
  \end{tabular}
  }
\end{table*}

\begin{table*}[t]
    \caption{Predictive performance of each comparing method on MLL datasets in terms of \textit{Ranking loss} (mean $ \pm $ std). The best performance is highlighted in bold (the smaller the better).}
        \label{ranking}
        \centering
        \resizebox{0.9\linewidth}{!}
        {
        \begin{tabular}{ccccccc}
        \toprule
                          & Image       & Scene       & Yeast       & Corel5k    & Mirflickr   & Delicious   \\ \midrule
        {\an}       & 0.432$\pm$0.067          & 0.321$\pm$0.113          & 0.383$\pm$0.066          & 0.140$\pm$0.000          & 0.125$\pm$0.002          & 0.131$\pm$0.000          \\
        {\anls}    & 0.378$\pm$0.041          & 0.246$\pm$0.064          & 0.365$\pm$0.031          & 0.186$\pm$0.003          & 0.163$\pm$0.006          & 0.213$\pm$0.007          \\
        {\wan}     & 0.354$\pm$0.051          & 0.216$\pm$0.023          & 0.212$\pm$0.021          & 0.129$\pm$0.000          & 0.121$\pm$0.002          & 0.126$\pm$0.000          \\
        {\epr}     & 0.401$\pm$0.053          & 0.291$\pm$0.056          & 0.208$\pm$0.010          & 0.139$\pm$0.000          & 0.119$\pm$0.001          & 0.126$\pm$0.000          \\
        {\role}    & 0.340$\pm$0.059          & 0.174$\pm$0.028          & 0.213$\pm$0.017          & 0.259$\pm$0.004          & 0.182$\pm$0.014          & 0.336$\pm$0.007          \\
        {\emm}      & 0.471$\pm$0.044          & 0.322$\pm$0.115          & 0.261$\pm$0.030          & 0.155$\pm$0.002          & 0.134$\pm$0.004          & 0.164$\pm$0.001          \\
        {\emapl} & 0.508$\pm$0.028          & 0.420$\pm$0.069          & 0.245$\pm$0.026          & 0.135$\pm$0.001          & 0.138$\pm$0.003          & 0.163$\pm$0.003          \\
        {\smilee}   & 0.260$\pm$0.020          & 0.161$\pm$0.045          & 0.167$\pm$0.002          & 0.125$\pm$0.003          & 0.120$\pm$0.002          & 0.126$\pm$0.000          \\ 
        {\smib}     & 0.251$\pm$0.003              & 0.163$\pm$0.001              & 0.167$\pm$0.003              & 0.135$\pm$0.002              & 0.137$\pm$0.000              & 0.153$\pm$0.001             \\ \midrule
        {\llr}  & 0.346$\pm$0.072 & 0.155$\pm$0.021 & 0.227$\pm$0.001 & 0.114$\pm$0.001 & 0.123$\pm$0.003 & 0.129$\pm$0.002 \\
        {\llcp} & 0.329$\pm$0.041 & 0.148$\pm$0.017 & 0.215$\pm$0.000 & 0.114$\pm$0.003 & 0.124$\pm$0.003 & 0.160$\pm$0.001 \\
        {\llct} & 0.327$\pm$0.019 & 0.180$\pm$0.038 & 0.238$\pm$0.001 & 0.115$\pm$0.001 & 0.124$\pm$0.002 & 0.160$\pm$0.000 \\ \midrule
        {\proposed}    & \textbf{0.164$\pm$0.027} & \textbf{0.112$\pm$0.021} & \textbf{0.164$\pm$0.001} & \textbf{0.113$\pm$0.001} & \textbf{0.118$\pm$0.001} & \textbf{0.122$\pm$0.000}\\ \bottomrule
        \end{tabular}
        }
\end{table*}

\begin{table*}[t]
    \caption{Predictive performance of each comparing method on MLL datasets in terms of \textit{Average Precision} (mean $ \pm $ std). The best performance is highlighted in bold (the larger the better).}
        \label{ap}
        \centering
        \resizebox{0.9\linewidth}{!}
        {
        \begin{tabular}{ccccccc}
        \toprule
                      & Image       & Scene       & Yeast       & Corel5k    & Mirflickr   & Delicious            \\ \midrule
        {\an}           & 0.534$\pm$0.061          & 0.580$\pm$0.104          & 0.531$\pm$0.079          & 0.217$\pm$0.003          & 0.615$\pm$0.004          & 0.317$\pm$0.002          \\
        {\anls}          & 0.574$\pm$0.037          & 0.631$\pm$0.072          & 0.538$\pm$0.044          & 0.230$\pm$0.002          & 0.587$\pm$0.006          & 0.261$\pm$0.006          \\
        {\wan}           & 0.576$\pm$0.041          & 0.661$\pm$0.033          & 0.698$\pm$0.017          & 0.241$\pm$0.002          & 0.621$\pm$0.004          & 0.315$\pm$0.000          \\
        {\epr}          & 0.539$\pm$0.028          & 0.597$\pm$0.062          & 0.710$\pm$0.008          & 0.214$\pm$0.001          & 0.628$\pm$0.003          & 0.314$\pm$0.000          \\
        {\role}          & 0.606$\pm$0.041          & 0.700$\pm$0.040          & 0.711$\pm$0.013          & 0.203$\pm$0.003          & 0.516$\pm$0.027          & 0.130$\pm$0.003          \\
        {\emm}            & 0.486$\pm$0.031          & 0.549$\pm$0.103          & 0.642$\pm$0.029          & 0.294$\pm$0.002          & 0.614$\pm$0.003          & 0.293$\pm$0.001          \\
        {\emapl}       & 0.467$\pm$0.026          & 0.448$\pm$0.049          & 0.654$\pm$0.040          & 0.275$\pm$0.003          & 0.589$\pm$0.007          & 0.311$\pm$0.001          \\
        {\smilee}         & 0.670$\pm$0.021          & 0.722$\pm$0.071          & 0.751$\pm$0.004          & 0.295$\pm$0.004          & \textbf{0.629$\pm$0.003} & 0.318$\pm$0.001          \\
        {\smib}           & 0.675$\pm$0.003             & 0.711$\pm$0.006               & 0.712$\pm$0.002             & 0.257$\pm$0.005               & 0.628$\pm$0.001             & 0.257$\pm$0.004    \\ \midrule
        {\llr}  & 0.605$\pm$0.058 & 0.714$\pm$0.035 & 0.658$\pm$0.006 & 0.268$\pm$0.002          & 0.625$\pm$0.001 & 0.296$\pm$0.004 \\
        {\llcp} & 0.595$\pm$0.031 & 0.735$\pm$0.028 & 0.700$\pm$0.000 & 0.259$\pm$0.004 & 0.621$\pm$0.007 & 0.251$\pm$0.007          \\
        {\llct} & 0.600$\pm$0.012 & 0.669$\pm$0.052 & 0.629$\pm$0.007 & 0.258$\pm$0.004          & 0.619$\pm$0.004 & 0.253$\pm$0.004          \\ \midrule
        {\proposed} & \textbf{0.749$\pm$0.037} & \textbf{0.795$\pm$0.031} & \textbf{0.758$\pm$0.002} & \textbf{0.304$\pm$0.003} & 0.628$\pm$0.003          & \textbf{0.319$\pm$0.001} \\ \bottomrule
        \end{tabular}
        }
\end{table*}

\begin{figure*}
    \centering
    \includegraphics[width=\linewidth]{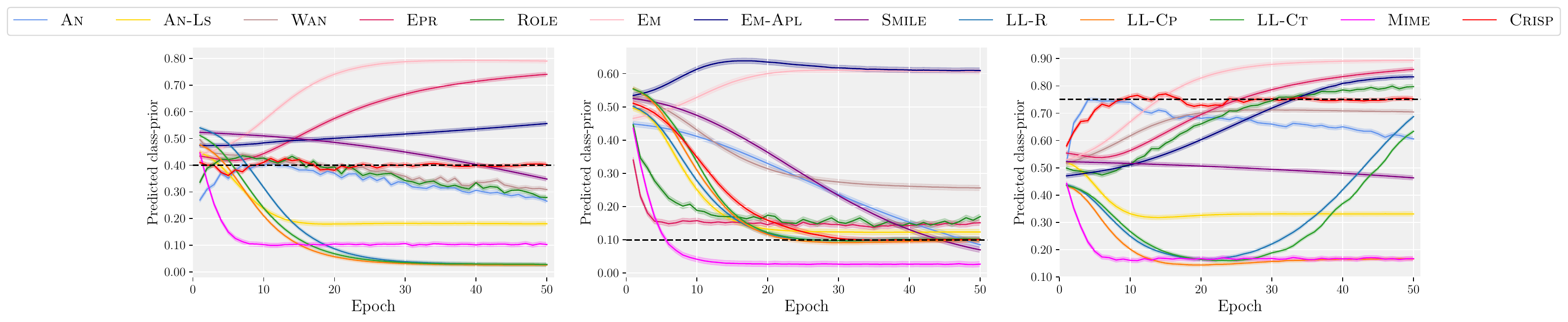}
    \caption{Predicted class-prior of \an \cite{cole2021multi}, \anls \cite{cole2021multi}, \wan \cite{cole2021multi}, \epr \cite{cole2021multi}, \role \cite{cole2021multi}, \emm \cite{cole2021multi}, \emapl \cite{zhou2022acknowledging}, \smilee \cite{xu2022one}, \smib \cite{liu2023revisiting}, \lln \cite{kim2022large} and {\proposed} on the \textit{$3$-rd} (left), \textit{$10$-th} (middle), and \textit{$12$-th} labels (right) of the dataset \texttt{Yeast}.}
    \label{fig:label_prior}
\end{figure*}

\begin{figure*}[t]
    \centering
    \begin{subfigure}{0.45\linewidth}
    \centering
    \includegraphics[width=\linewidth]{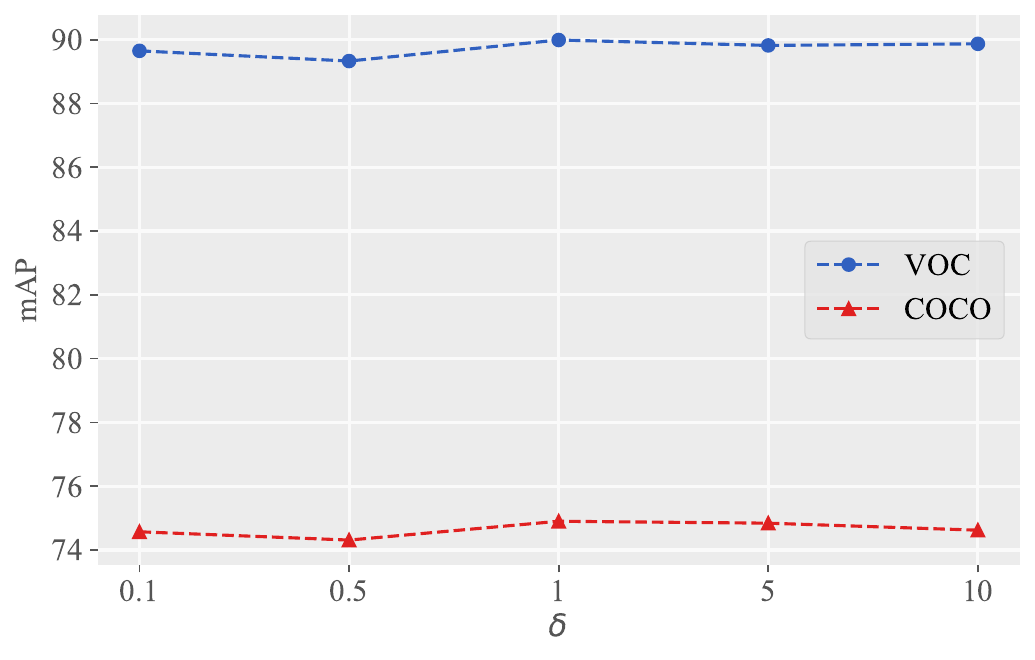}
    \caption{Sensitivity analysis of $\delta$}
    \label{delta}
    \end{subfigure}
    \begin{subfigure}{0.45\linewidth}
      \centering
      \includegraphics[width=\linewidth]{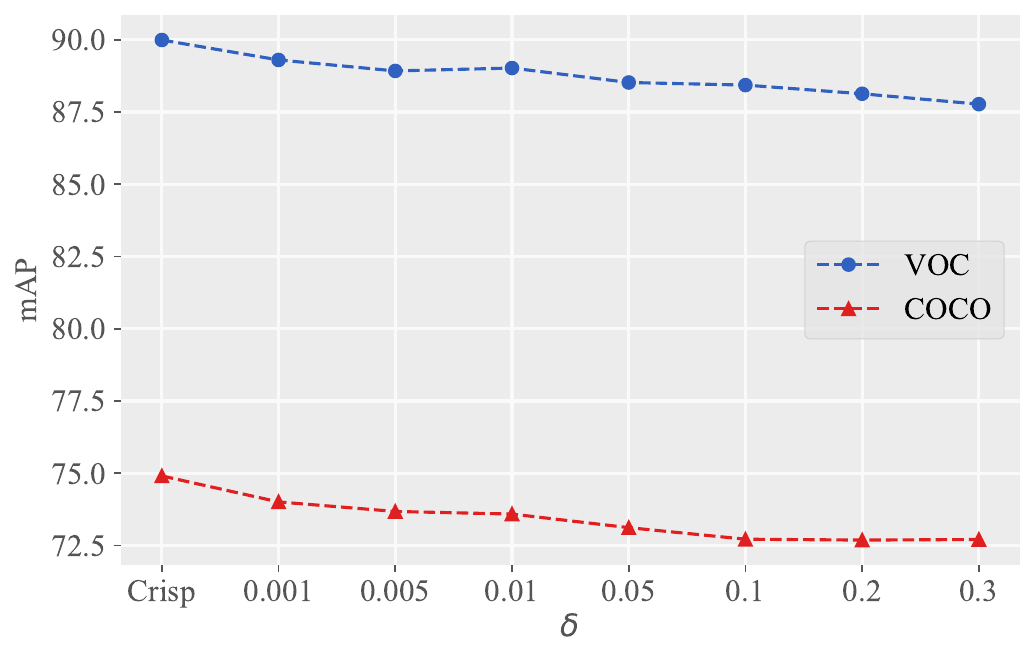}
      \caption{Sensitivity analysis of $\lambda$}
      \label{ablation}
  \end{subfigure}
    \caption{ (a) Parameter sensitivity analysis of $\delta$ ($ \tau $ is fixed as $ 0.01 $, $ \lambda $ is fixed as $ 1 $); 
    (b) The initial data point represents the performance of the proposed {\proposed} (with class-priors estimator). The others are the performance with a fixed value for all class-priors gradually increasing from $0.001$ to $0.3$.}
    \label{Convergence1}
\end{figure*}

\subsection{Experimental Results}
Table \ref{mAP} presents the comparison results of {\proposed} compared with other methods on \texttt{VOC},  \texttt{COCO}, \texttt{NUS}, and  \texttt{CUB}. The proposed method achieves the best performance on \texttt{VOC},  \texttt{COCO}, \texttt{NUS} and \texttt{CUB}.
Tables \ref{ranking} and \ref{ap} record the results of our method and other comparing methods on the MLL datasets in terms of \textit{Ranking loss} and \textit{Average precision} respectively. Similar results for other metrics can be found in Appendix \ref{MLL_results}. 
Note that due to the inability to compute the loss function of {\lagc} without data augmentation, we do not report the results of {\lagc} on MLL datasets because data augmentation techniques are not suitable for the MLL datasets.
Similarly, since the operations of {\boost} for CAM are not applicable to the tabular data in MLL datasets, its results are also not reported.
The results demonstrate that our proposed method consistently achieves desirable performance in almost all cases (except the result of \texttt{Mirflickr} on the metric \textit{Average Precision}, where our method attains a comparable performance against {\smilee}). Table \ref{Wilcoxon} in Appendix \ref{app:wilcoxon} reports the \textit{p}-values of the wilcoxon signed-ranks test \cite{demvsar2006statistical} for the corresponding tests and the statistical test results at 0.05 signiﬁcance level, which reveals that {\proposed} consistently outperforms other comparing algorithms (43 out of 60 test cases score \textbf{win}). These experimental results validate the effectiveness of {\proposed} in addressing SPMLL problems.



\subsection{Further Analysis}

\subsubsection{Class-Prior Prediction}
Figure \ref{fig:label_prior} illustrates the comparison results of the predicted class-priors of {\proposed} with other methods on the \textit{$3$-rd} (left), \textit{$10$-th} (middle), and \textit{$12$-th} labels (right) of the dataset \texttt{Yeast}. 
Compared with other approaches, whose predicted class-priors $p(\hat y_j = 1)$, which represents the expected value of the predicted, significantly deviate from the true class-priors, {\proposed} achieves consistent predicted class-priors with the ground-truth class-priors (black dashed lines). 
Without the constraint of the class-priors, the predicted class-prior probability diverges from the true class-prior as epochs increase, significantly impacting the model's performance.
In this experiment, the true class-priors are derived by calculating the statistical information for each dataset. More experimental results about the convergence analyses of estimated class-priors of all classes on MLIC datasets are recorded in Appendix \ref{MLIC_results}. These results demonstrate the necessity of incorporating class-priors in the training of the SPMLL model. 

\begin{table}[t!]
  \centering
  \caption{Time cost of class-priors estimation and the whole training time of one epoch.}
  \label{tab:time}
      \begin{tabular}{@{}lcccc@{}}
      \toprule
      & VOC & COCO & NUS & CUB \\ \midrule
      Class-priors estimation & 0.24 & 3.47 & 6.4 & 0.45 \\
      Whole training epoch & 2.19 & 27.29 & 49.09 & 3.89 \\
      \bottomrule
  \end{tabular}
  \footnotesize
\end{table}

\begin{table}[t!]
  \centering
  \caption{Predictive performance of {\proposed} with different updating frequency of class-priors estimation (3 epoch).}
  \label{tab:3ep}
      \begin{tabular}{@{}lcccc@{}}
      \toprule
      & VOC & COCO & NUS & CUB \\ \midrule
      \textsc{Crisp-3ep} & 89.077$\pm$0.251 & 73.930$\pm$0.399 & 49.463$\pm$0.216 & 19.450$\pm$0.389 \\
      {\proposed} & 89.931$\pm$0.014 & 74.913$\pm$0.235 & 50.045$\pm$0.112 & 22.150$\pm$0.028 \\
      \bottomrule
  \end{tabular}
  \footnotesize
\end{table}


\subsubsection{Sensitivity Analysis}
The performance sensitivity of the proposed {\proposed} approach with respect to its parameters $\delta$, $\tau$ and $ \lambda $ during the class-priors estimation phase is analyzed in this section. Figure \ref{delta} illustrates the performance of the proposed method on \texttt{VOC} and \texttt{COCO} under various parameter settings, where $\delta$ is incremented from $0.001$ to $0.1$. The parameter sensitivity analysis for $\tau$ and $\lambda$ are provided in Figure \ref{tau} and \ref{lambda} in Appendix \ref{app:sens}. 
The performance of the proposed method remains consistently stable across a wide range of parameter values. This characteristic is highly desirable as it allows for the robust application of the proposed method without the need for meticulous parameter fine-tuning, ensuring reliable classification results.

\subsubsection{Ablation Study}
Figure \ref{ablation} depicts the results of the ablation study to investigate the impact of the class-priors estimator by comparing it with a fixed value for all class-priors. 
The initial data point represents the performance of the proposed {\proposed} (with class-priors estimator). Subsequently, we maintain a fixed identical class-priors, gradually increasing it from $0.001$ to $0.3$. As expected, our method exhibits superior performance when utilizing the class-priors estimator, compared with employing a fixed class-prior proportion.
Furthermore, we conduct experiments comparing the performance of {\proposed} with the approach that estimating the class-priors with the full labels of validation set ({\textsc{Crisp-val}}). Table \ref{Crisp-val} shows that the performance of {\proposed} is superior to {\textsc{Crisp-val}}. 
It is indeed feasible to estimate the class-priors using the validation set. However, the size of validation set in many datasets is often quite small, which can lead to unstable estimation of the class-priors, thus leading to a suboptimal performance. Similar results are observed in Table \ref{val-mll} of Appendix \ref{app:val-mll} for the MLL datasets.

\begin{wraptable}{r}{0.5\linewidth}
  \centering
  \captionof{table}{Predictive performance comparing {\proposed} with the approach of estimating priors from the validation set (\textsc{Crisp-val}).}
  \label{Crisp-val}
    \begin{tabular}{@{}lcccccc@{}}
      \toprule
      & \textsc{Crisp-val} & {\proposed} \\ \midrule
      VOC & 89.585$\pm$0.318 & \textbf{89.931$\pm$0.014} \\
      COCO & 74.435$\pm$0.148 & \textbf{74.913$\pm$0.235} \\
      NUS & 49.230$\pm$0.113 & \textbf{50.045$\pm$0.112} \\
      CUB & 19.600$\pm$1.400 & \textbf{22.150$\pm$0.028} \\
      \bottomrule
    \end{tabular}
\end{wraptable}

\subsubsection{Time Cost of Class-Priors Estimation}\label{sec:time}
In Eq. (\ref{eq:threshold}), we have adopted an exhaustive search strategy to find an optimal threshold for estimating class-priors in each training epoch, which may introduce additional computational overhead to the algorithm. We conducted experimental analysis on this aspect. As illustrated in the Table \ref{tab:time}, the time for class-priors estimation is short compared to the overall training time for one epoch, ensuring that our method remains practical for use in larger datasets. Additionally, to further enhance the speed of our algorithm, we have experimented with updating the class-priors every few epochs instead of every single one in Table \ref{tab:3ep}. The variant of our method, denoted as \textsc{Crisp-3ep}, updates the priors every three epochs and our experiments show that this results in a negligible loss in performance.


\section{Conclusion}
In conclusion, this paper presents a novel approach to address the single-positive multi-label learning (SPMLL) problem by considering the impact of class-priors on the model. 
We propose a theoretically guaranteed class-priors estimation method that ensures the convergence of estimated class-prior to ground-truth class-priors during the training process. 
Furthermore, we introduce an unbiased risk estimator based on the estimated class-priors and derive a generalization error bound to guarantee that the obtained risk minimizer would approximately converge to the optimal risk minimizer of fully supervised learning.
Experimental results on ten MLL benchmark datasets demonstrate the effectiveness and superiority of our method over existing SPMLL approaches.




\bibliographystyle{plain}
\bibliography{neurips_2024.bbl}


\appendix
\newpage
\section{Appendix}


\subsection{Proof of Theorem \ref{thm:cpe}}\label{app:proof_cpe}

\begin{proof}
  Firstly, we have:
  \begin{equation}
    \begin{aligned}
      \Big\vert \frac{\hat q_j(z)}{\hat q_j^p(z)} - \frac{q_j(z)}{q_j^p(z)} \Big\vert &= \frac{\hat q_j(z)q_j^p(z) - \hat q_j^p(z)q_j(z)}{\hat q_j^p(z)q_j^p(z)} \\
      & \leq \frac{ \vert \hat q_j(z)q_j^p(z) - q_j^p(z)q_j(z) \vert + \vert q_j^p(z)q_j(z) - \hat q_j^p(z)q_j(z) \vert  }{\hat q_j^p(z)q_j^p(z)} \\
      & = \frac{1}{\hat q_j^p(z)}\vert \hat q_j(z) - q_j(z) \vert + \frac{q_j(z)}{\hat q_j^p(z)q_j^p(z)} \vert \hat q_j^p(z) - q_j^p(z) \vert,
    \end{aligned}
  \end{equation}
  where $ z $ is an arbitrary constant in $ [0,1] $. Using DKW inequality, we have with probability $ 1 - \delta $: $ \vert \hat q_j(z) - q_j(z) \vert \leq \sqrt{\frac{\log{2/\delta}}{2n}} $ and $ \vert \hat q_j^p(z) - q_j^p(z) \vert \leq \sqrt{\frac{\log{2/\delta}}{2n_j^p}} $. Therefore, with probability $ 1 - \delta $:
  \begin{equation}\label{eq:lemma1}
    \begin{aligned}
      \Big\vert \frac{\hat q_j(z)}{\hat q_j^p(z)} - \frac{q_j(z)}{q_j^p(z)} \Big\vert &\leq \frac{1}{\hat q_j^p(z)} \left( \sqrt{\frac{\log{4/\delta}}{2n}} + \frac{q_j(z)}{q_j^p(z)}\sqrt{\frac{\log{4/\delta}}{2n_j^p}} \right).
  \end{aligned}
\end{equation}
Then, we define:
\begin{equation*}
  \begin{aligned}
    &\hat{z} = \arg\min_{z\in [0,1]} \left( \frac{\hat q_j(z)}{\hat q_j^p(z)} + 
    \frac{1+\tau}{\hat q_j^p(z)}\left( \sqrt{\frac{\log(4/\delta)}{2n}} + \sqrt{\frac{\log(4/\delta)}{2n_j^p}} \right) \right), \\
    &z^\star = \arg\min_{z\in[0,1]}\frac{ q_j(z) }{ q_j^p(z) }, \\
    &\hat\pi_j = \frac{\hat q_j(\hat z)}{\hat q_j^p(\hat z)} \qquad\text{    and    }\qquad
    \pi_j^\star = \frac{ q_j(z^\star) }{ q_j^p(z^\star) }.
  \end{aligned}
\end{equation*}

Next, consider $ z^\prime\in [0,1] $ such that $ \hat q_j^p(z^\prime) = \frac{\tau}{2+\tau} \hat q_j^p(z^\star) $. We now show that $ \hat z < z^\prime $. For any $ z\in [0,1] $, by the DWK inequality, we have with probability $ 1-\delta $:
\begin{equation}
  \begin{aligned}
    \hat q_j^p(z) - \sqrt{\frac{\log{4/\delta}}{2n_j^p}}  \leq q_j^p(z),  \\
    q_j(z) - \sqrt{\frac{\log{4/\delta}}{2n}} \leq  \hat q_j(z).
  \end{aligned}
\end{equation}

Since $ \frac{ q_j(z^\star) }{ q_j^p(z^\star) } \leq \frac{ q_j(z) }{ q_j^p(z) } $, we have:
\begin{equation}
  \begin{aligned}
    \hat q_j(z) \geqslant q_j^p(z) \frac{q_j\left(z^\star\right)}{q_j^p\left(z^\star\right)}-\sqrt{\frac{\log (4 / \delta)}{2 n}} \geqslant\left(\hat{q}_j^p(z)-\sqrt{\frac{\log (4 / \delta)}{2 n_j^p}}\right) \frac{q_j\left(z^\star\right)}{q_j^p\left(z^\star\right)}-\sqrt{\frac{\log (4 / \delta)}{2 n}}.
  \end{aligned}
\end{equation}

Therefore, we have:
\begin{equation}
  \begin{aligned}
    \frac{\hat q_j(z)}{\hat q_j^p(z)} \geq \pi_j^\star - \frac{1}{\hat q_j^p(z)} \left( \sqrt{\frac{\log (4 / \delta)}{2 n}}  + \pi_j^\star \sqrt{\frac{\log (4 / \delta)}{2 n_j^p}} \right).
  \end{aligned}
\end{equation}

Using Eq. (\ref{eq:lemma1}) at $ z^\star $ and the fact that $ \pi_j^\star = \frac{ q_j(z^\star) }{ q_j^p(z^\star) }\leq 1 $ , we have:
\begin{equation}
  \begin{aligned}
    \frac{\hat q_j(z)}{\hat q_j^p(z)} \geq \frac{\hat q_j(z^\star)}{\hat q_j^p(z^\star)} - \left( \frac{1}{\hat q_j^p(z^\star)} + \frac{1}{\hat q_j^p(z)} \right) \left( \sqrt{\frac{\log (4 / \delta)}{2 n}}  + \pi_j^\star \sqrt{\frac{\log (4 / \delta)}{2 n_j^p}} \right) \\
    \geq \frac{\hat q_j(z^\star)}{\hat q_j^p(z^\star)}  - \left( \frac{1}{\hat q_j^p(z^\star)} + \frac{1}{\hat q_j^p(z)} \right) \left( \sqrt{\frac{\log (4 / \delta)}{2 n}} + \sqrt{\frac{\log (4 / \delta)}{2 n_j^p}} \right)  .
  \end{aligned}
\end{equation}
Furthermore, the upper confidence bound at $ z $ is lower bounded by:
\begin{equation}\label{eq:upper_bound}
  \begin{aligned}
    &\frac{\hat q_j(z)}{\hat q_j^p(z)} + 
    \frac{1+\tau}{\hat q_j^p(z)}\left( \sqrt{\frac{\log(4/\delta)}{2n}} + \sqrt{\frac{\log(4/\delta)}{2n_j^p}} \right) \\
    &\geq \frac{\hat q_j(z^\star)}{\hat q_j^p(z^\star)} + \left( \frac{1+\tau}{\hat q_j^p(z)} - \frac{1}{\hat q_j^p(z^\star)} - \frac{1}{\hat q_j^p(z)} \right)\left( \sqrt{\frac{\log(4/\delta)}{2n}} + \sqrt{\frac{\log(4/\delta)}{2n_j^p}} \right) \\
    &= \frac{\hat q_j(z^\star)}{\hat q_j^p(z^\star)} + \left( \frac{\tau}{\hat q_j^p(z)} - \frac{1}{\hat q_j^p(z^\star)} \right)\left( \sqrt{\frac{\log(4/\delta)}{2n}} + \sqrt{\frac{\log(4/\delta)}{2n_j^p}} \right). \\
  \end{aligned}
\end{equation}
Using Eq. (\ref{eq:upper_bound}) at $ z=z^\prime $ where $ \hat q_j^p(z^\prime) = \frac{\tau}{2+\tau} \hat q_j^p(z^\star) $, we have:
\begin{equation}
  \begin{aligned}
    &\frac{\hat q_j(z^\prime)}{\hat q_j^p(z^\prime)} + 
    \frac{1+\tau}{\hat q_j^p(z^\prime)}\left( \sqrt{\frac{\log(4/\delta)}{2n}} + \sqrt{\frac{\log(4/\delta)}{2n_j^p}} \right) \\
    &\geq \frac{\hat q_j(z^\star)}{\hat q_j^p(z^\star)} + \left( \frac{\tau}{\hat q_j^p(z^\prime)} - \frac{1}{\hat q_j^p(z^\star)} \right)\left( \sqrt{\frac{\log(4/\delta)}{2n}} + \sqrt{\frac{\log(4/\delta)}{2n_j^p}} \right). \\
    &\geq \frac{\hat q_j(z^\star)}{\hat q_j^p(z^\star)} + 
    \frac{1+\tau}{\hat q_j^p(z^\star)}\left( \sqrt{\frac{\log(4/\delta)}{2n}} + \sqrt{\frac{\log(4/\delta)}{2n_j^p}} \right).
  \end{aligned}
\end{equation}
Moreover from Eq. (\ref{eq:upper_bound}) and using definition of $ \hat z $, we have:
\begin{equation}
  \begin{aligned}
    &\frac{\hat q_j(z^\prime)}{\hat q_j^p(z^\prime)} + 
    \frac{1+\tau}{\hat q_j^p(z^\prime)}\left( \sqrt{\frac{\log(4/\delta)}{2n}} + \sqrt{\frac{\log(4/\delta)}{2n_j^p}} \right) \\
    &\geq \frac{\hat q_j(z^\star)}{\hat q_j^p(z^\star)} + 
    \frac{1+\tau}{\hat q_j^p(z^\star)}\left( \sqrt{\frac{\log(4/\delta)}{2n}} + \sqrt{\frac{\log(4/\delta)}{2n_j^p}} \right) \\
    &\geq \frac{\hat q_j(\hat z)}{\hat q_j^p(\hat z)} + 
    \frac{1+\tau}{\hat q_j^p(\hat z)}\left( \sqrt{\frac{\log(4/\delta)}{2n}} + \sqrt{\frac{\log(4/\delta)}{2n_j^p}} \right), \\
  \end{aligned}
\end{equation}
and hence $ \hat z \leq z^\prime $.

We now establish an upper and lower bound on $ \hat z $. By definition of $ \hat z $, we have:
\begin{equation}
  \begin{aligned}
    &\frac{\hat q_j(\hat z)}{\hat q_j^p(\hat z)} + 
    \frac{1+\tau}{\hat q_j^p(\hat z)}\left( \sqrt{\frac{\log(4/\delta)}{2n}} + \sqrt{\frac{\log(4/\delta)}{2n_j^p}} \right) \\
    &\leq \min_{z\in [0,1]} \left( \frac{\hat q_j(z)}{\hat q_j^p(z)} + 
    \frac{1+\tau}{\hat q_j^p(z)}\left( \sqrt{\frac{\log(4/\delta)}{2n}} + \sqrt{\frac{\log(4/\delta)}{2n_j^p}} \right) \right) \\
    &\leq \frac{\hat q_j(z^\star)}{\hat q_j^p(z^\star)} + 
    \frac{1+\tau}{\hat q_j^p(z^\star)}\left( \sqrt{\frac{\log(4/\delta)}{2n}} + \sqrt{\frac{\log(4/\delta)}{2n_j^p}} \right).
  \end{aligned}
\end{equation}
Using Eq. (\ref{eq:lemma1}) at $ z^\star $, we have:
\begin{equation}
  \begin{aligned}
    \frac{\hat q_j(z^\star)}{\hat q_j^p(z^\star)} \leq \frac{q_j(z^\star)}{ q_j^p(z^\star)} + 
    \frac{1}{\hat q_j^p(z^\star)}\left( \sqrt{\frac{\log(4/\delta)}{2n}} + \pi_j^\star \sqrt{\frac{\log(4/\delta)}{2n_j^p}} \right).
  \end{aligned}
\end{equation}
Then, we have:
\begin{equation}
  \begin{aligned}
    \hat\pi_j = \frac{\hat q_j(\hat z)}{\hat q_j^p(\hat z)} \leq \pi_j^\star + 
    \frac{2+\tau}{\hat q_j^p(z^\star)}\left( \sqrt{\frac{\log(4/\delta)}{2n}} + \sqrt{\frac{\log(4/\delta)}{2n_j^p}} \right).
  \end{aligned}
\end{equation}

Assume $ n_j^p \geq 2\frac{\log4/\delta}{{q_j^p}^2(z^\star)} $, we have $ \hat q_j^p(z^\star) \geq q_j^p(z^\star) / 2 $ and hence:
\begin{equation}\label{eq:23}
  \begin{aligned}
    \hat\pi_j \leq \pi_j^\star + \frac{4+2\tau}{q_j^p(z^\star)}\left( \sqrt{\frac{\log(4/\delta)}{2n}} + \sqrt{\frac{\log(4/\delta)}{2n_j^p}} \right).
  \end{aligned}
\end{equation}

From Eq. (\ref{eq:lemma1}) at $ \hat{z} $, we have:
\begin{equation}
  \begin{aligned}
    \frac{q_j(\hat z)}{q_j^p(\hat z)} \leq \frac{\hat q_j(\hat z)}{\hat q_j^p(\hat z)} + \frac{1}{\hat q_j^p(\hat z)}\left( \sqrt{\frac{\log(4/\delta)}{2n}} + \frac{q_j(\hat z)}{q_j^p(\hat z)}\sqrt{\frac{\log(4/\delta)}{2n_j^p}} \right).
  \end{aligned}
\end{equation}

Since $ \pi_j^\star \leq \frac{q_j(\hat z)}{q_j^p(\hat z)} $, we have:
\begin{equation}\label{eq:25}
  \begin{aligned}
    \pi_j^\star \leq \frac{q_j(\hat z)}{q_j^p(\hat z)} \leq \frac{\hat q_j(\hat z)}{\hat q_j^p(\hat z)} + \frac{1}{\hat q_j^p(\hat z)}\left( \sqrt{\frac{\log(4/\delta)}{2n}} + \frac{q_j(\hat z)}{q_j^p(\hat z)}\sqrt{\frac{\log(4/\delta)}{2n_j^p}} \right).
  \end{aligned}
\end{equation}

Using Eq. (\ref{eq:23}) and the assumption that $ n \geq n_j^p \geq 2\frac{\log4/\delta}{{q_j^p}^2(z^\star)} $ , we have:
\begin{equation}
  \begin{aligned}
    &\hat\pi_j = \frac{\hat q_j(\hat z)}{\hat q_j^p(\hat z)} \leq \pi_j^\star + 
    \frac{4+2\tau}{q_j^p(z^\star)}\left( \sqrt{\frac{\log(4/\delta)}{2n}} + \sqrt{\frac{\log(4/\delta)}{2n_j^p}} \right) \\
    &\leq \pi_j^\star + {4+2\tau} \leq 1 + {4+2\tau} = 5+2\tau.
  \end{aligned}
\end{equation}
Using this in Eq. (\ref{eq:25}), we have:
\begin{equation}
  \begin{aligned}
    \pi_j^\star \leq \frac{\hat q_j(\hat z)}{\hat q_j^p(\hat z)} + \frac{1}{\hat q_j^p(\hat z)}\left( \sqrt{\frac{\log(4/\delta)}{2n}} + (5+2\tau)\sqrt{\frac{\log(4/\delta)}{2n_j^p}} \right).
  \end{aligned}
\end{equation}
Since $ \hat z \leq z^\prime $, we have $ \hat q_j^p(\hat z) \geq \hat q_j^p(z^\prime) = \frac{\tau}{2+\tau} \hat q_j^p(z^\star) $. Therefore, we have:
\begin{equation}
  \begin{aligned}
    \pi_j^\star - \frac{2+\tau}{\tau\hat q_j^p(z^\star)}\left( \sqrt{\frac{\log(4/\delta)}{2n}} + (5+2\tau)\sqrt{\frac{\log(4/\delta)}{2n_j^p}} \right) \leq \frac{\hat q_j(\hat z)}{\hat q_j^p(\hat z)} = \hat\pi_j.
  \end{aligned}
\end{equation}
With the assumption that $ n_j^p \geq 2\frac{\log4/\delta}{{q_j^p}^2(z^\star)} $, we have $ \hat q_j^p(z^\star) \geq q_j^p(z^\star) / 2 $, which implies:
\begin{equation}
  \begin{aligned}
    \pi_j^\star - \frac{4+2\tau}{\tau q_j^p(z^\star)}\left( \sqrt{\frac{\log(4/\delta)}{2n}} + (5+2\tau)\sqrt{\frac{\log(4/\delta)}{2n_j^p}} \right) \leq \hat\pi_j.
  \end{aligned}
\end{equation}

Note that since $ \pi_j \leq \pi_j^\star $, the lower bound remains the same as in Theorem \ref{thm:cpe}. For the upper bound, with $ q_j(z^\star)=\pi_jq_j^p(z^\star)+(1-\pi_j)q_j^n(z^\star) $, we have $ \pi_j^\star = \pi_j + (1-\pi_j)\frac{q_j^n(z^\star)}{q_j^p(z^\star)} $. Then the proof is completed.

\end{proof}

\subsection{The details of the optimization of Eq. (\ref{eq:threshold})}\label{app:threshold}
In practice, to determine the optimal threshold, we conduct an exhaustive search across the set of outputs generated by the function $ f^j $ for each class. For instance, for a given class $ j $, and a set of instances $ \boldsymbol{x}_1, \boldsymbol{x}_2, \boldsymbol{x}_3 $ in our dataset, we compute the corresponding outputs $ z_1 = f^j(\boldsymbol{x}_1), z_2 = f^j(\boldsymbol{x}_2), z_3 = f^j(\boldsymbol{x}_3) $.

The optimal threshold $ \hat{z} $ is then selected by identifying the value of $ z\in\{z_1, z_2, z_3\} $ that minimizes the objective function specified in Equation (2):

$$
\hat{z} = \arg\min_{z\in \{z_1, z_2, z_3\}} \left( \frac{\hat{q}_j(z)}{\hat{q}_j^p(z)} + \frac{1+\tau}{\hat{q}_j^p(z)}\left( \sqrt{\frac{\log(4/\delta)}{2n}} + \sqrt{\frac{\log(4/\delta)}{2n_j^p}} \right) \right)
$$
This approach ensures that we find the optimal threshold that minimizes the given expression, as per Eq. (\ref{eq:threshold}), across all available output values from the function $f^j$.

\subsection{Details of Eq. (\ref{eq:full_label_loss_func})}\label{app:proof_risk}
\begin{equation}
	\begin{aligned}
		\mathcal{R}(f) &= \mathbb{E}_{(\bm x, \bm y)\sim p(\bm x, \bm y)} \left[ \mathcal{L}(f(\bm x), \bm y) \right] \\
		& = \int_{\bm x} \sum_{\bm y} \mathcal{L}(f(\bm x), \bm y) p(\bm x \vert \bm y) p(\bm y) d \bm x \\
		& = \sum_{\bm y} p(\bm y) \int_{\bm x} \mathcal{L}(f(\bm x), \bm y) p(\bm x \vert \bm y) d \bm x \\
		& = \sum_{\bm y} p(\bm y) \mathbb{E}_{\bm x \sim p(\bm x \vert \bm y)} \left[ \mathcal{L}(f(\bm x), \bm y) \right]. \\
	\end{aligned}
\end{equation}

\subsection{Details of Eq. (\ref{eq:full_label_loss_func2})}\label{app:proof_risk2}
In fact, any symmetric loss function can work, here we adopt the absolute loss function. The absolute loss function is $ \ell(f^j(\bm x), y_j) = |f^j(\bm x)-y_j| $, when $ y_j=1 $, $ \ell(f^j(\bm x), 1)=|1-f^j(\bm x)| $, and when $ y_j=0 $, $ \ell(f^j(\bm x), 0)=f^j(\bm x) $.  Then:
\begin{equation}
  \begin{aligned}
    \mathcal{R}(f) & = \sum_{\bm y} p(\bm y) \mathbb{E}_{\bm x \sim p(\bm x \vert \bm y)} \left[ \sum_{j=1}^c y_j\ell(f^j(\bm x), 1) + (1 - y_j)\ell(f^j(\bm x), 0)  \right] \\
		& = \sum_{j=1}^c p(y_j = 1) \mathbb{E}_{\bm x \sim p(\bm x \vert y_j = 1)}\left[ \ell(f^j(\bm x), 1) \right] 
    + p(y_j = 0) \mathbb{E}_{\bm x \sim p(\bm x \vert y_j = 0)}\left[ \ell(f^j(\bm x), 0) \right] \\
		& = \sum_{j=1}^c p(y_j = 1) \mathbb{E}_{\bm x \sim p(\bm x \vert y_j = 1)}\left[ 1 - f^j(\bm x) \right] 
    + (1 - p(y_j = 1)) \mathbb{E}_{\bm x \sim p(\bm x \vert y_j = 0)}\left[ f^j(\bm x) \right] \\
		& = \sum_{j=1}^c p(y_j = 1) \mathbb{E}_{\bm x \sim p(\bm x \vert y_j = 1)}\left[ 1 - f^j(\bm x) \right] + \mathbb{E}_{\bm x \sim p(\bm x)}\left[f^j(\bm x)\right] \\
    & \qquad - p(y_j = 1) \mathbb{E}_{\bm x \sim p(\bm x \vert y_j = 1)}\left[f^j(\bm x) \right] \\
		& = \sum_{j=1}^c p(y_j = 1) \mathbb{E}_{\bm x \sim p(\bm x \vert y_j = 1)}\left[ 1 - f^j(\bm x) \right] + \mathbb{E}_{\bm x \sim p(\bm x)}\left[f^j(\bm x)\right] \\
    & \qquad - p(y_j = 1) \mathbb{E}_{\bm x \sim p(\bm x \vert y_j = 1)}\left[f^j(\bm x) - 1 + 1 \right] \\
		& = \sum_{j=1}^c 2p(y_j = 1) \mathbb{E}_{\bm x \sim p(\bm x \vert y_j = 1)}\left[ 1 - f^j(\bm x) \right] + \mathbb{E}_{\bm x \sim p(\bm x)}\left[f^j(\bm x)\right] - p(y_j = 1).
  \end{aligned}
\end{equation}

\subsection{Proof of Theorem \ref{theorem:theorem1}}\label{app:proof_error_bound}

In this subsection, an estimation error bound is established for Eq. (\ref{eq:empirical_risk_estimator}) to demonstrate its learning consistency. 
Specifically, The derivation of the estimation error bound involves two main parts, each corresponding to one of the loss terms in Eq. (\ref{eq:empirical_risk_estimator}). 
The empirical risk estimator according to Eq. (\ref{eq:empirical_risk_estimator}) can be written as:
\begin{equation}
  \begin{aligned}
    \widehat{\mathcal{R}}_{sp}(f) &= \sum_{j=1}^c \frac{2\pi_j}{\vert \mathcal{S}_{L_j} \vert} \sum_{\bm x \in \mathcal{S}_{L_j}} \left(1 - f^j(\bm x)\right) 
    + \frac{1}{n} \sum_{\bm x \in \tilde{\mathcal{D}}}\left(f^j(\bm x) - \pi_j\right) \\
    &= \widehat{\mathcal{R}}_{sp}^L(f)+\widehat{\mathcal{R}}_{sp}^U(f),
  \end{aligned}
\end{equation}
Firstly, we define the function spaces as:
\begin{equation*}
  \begin{aligned}
    \mathcal{G}_{sp}^L &= \Big\{(\bm x, \bm l)\mapsto\sum_{j=1}^c 2\pi_j l_j \left(1 - f^j(\bm x)\right) \vert f \in \mathcal{F} \Big\},
    \mathcal{G}_{sp}^U = \Big\{(\bm x, \bm l)\mapsto\sum_{j=1}^c \left(f^j(\bm x) - \pi_j \right) \vert f \in \mathcal{F} \Big\}, \\
  \end{aligned}
\end{equation*}
and denote the expected Rademacher complexity \citep{Foundations} of the function spaces as:
\begin{equation*}
  \begin{aligned}
    \widetilde{\mathfrak{R}}_n\left(\mathcal{G}_{sp}^L\right)&=\mathbb{E}_{\bm{x}, \bm l, \bm{\sigma}}\left[\sup _{g \in \mathcal{G}_{sp}^L} \sum_{i=1}^n \sigma_i g\left(\bm{x}_i, \bm l_i\right)\right],\\
    \widetilde{\mathfrak{R}}_n\left(\mathcal{G}_{sp}^U \right)&=\mathbb{E}_{\bm{x}, \bm l, \bm{\sigma}}\left[\sup _{g \in \mathcal{G}_{sp}^U} \sum_{i=1}^n \sigma_i g\left(\bm{x}_i, \bm l_i\right)\right],
  \end{aligned}
\end{equation*}
where $ \bm\sigma=\left\{ \sigma_1, \sigma_2, \cdots, \sigma_n \right\} $ is $ n $ Rademacher variables with $ \sigma_i $ independently uniform variable taking value in $ \{+1,-1\} $. Then we have:
\begin{lemma}\label{lemma:lemma1}
  We suppose that the loss function $ \mathcal{L}_{sp}^L = \sum_{j=1}^c 2\pi_j l_j \left(1 - f^j(\bm x)\right) $ and $ \mathcal{L}_{sp}^U = \sum_{j=1}^c \left(f^j(\bm x) - \pi_j \right) $  could be bounded by $ M $, i.e., $ M=\sup_{\bm x\in \mathcal{X}, f\in\mathcal{F}, \bm l\in \mathcal{Y}} \max(\mathcal{L}_{sp}^L(f(\bm x), \bm l), \mathcal{L}_{sp}^U(f(\bm x), \bm l)) $, and for any $ \delta > 0 $, with probability at least $ 1-\delta $, we have:
  \begin{equation*}
    \begin{aligned}
      &\sup_{f\in\mathcal{F}} \vert \mathcal{R}_{sp}^L(f) - \hat{\mathcal{R}}_{sp}^L (f) \vert \leq \frac{2}{C}	\widetilde{\mathfrak{R}}_n\left(\mathcal{G}_{sp}^L\right) + \frac{M}{2 \min_{j}\vert \mathcal{S}_{L_j} \vert} \sqrt{\frac{\log\frac{2}{\delta}}{2n}}, \\
      &\sup_{f\in\mathcal{F}} \vert \mathcal{R}_{sp}^U(f) - \hat{\mathcal{R}}_{sp}^U (f) \vert \leq 2	\widetilde{\mathfrak{R}}_n\left(\mathcal{G}_{sp}^U\right) + \frac{M}{2} \sqrt{\frac{\log\frac{2}{\delta}}{2n}},
    \end{aligned}
  \end{equation*}
\end{lemma}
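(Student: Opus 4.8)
The plan is to prove Lemma \ref{lemma:lemma1} via the standard uniform-deviation argument: bound the supremum of the empirical process by (i) McDiarmid's bounded-difference inequality to control the fluctuation of $\sup_{f}|\mathcal{R}_{sp}^\bullet(f)-\hat{\mathcal{R}}_{sp}^\bullet(f)|$ around its expectation, and (ii) a symmetrization step to bound that expectation by the Rademacher complexities $\widetilde{\mathfrak{R}}_n(\mathcal{G}_{sp}^L)$ and $\widetilde{\mathfrak{R}}_n(\mathcal{G}_{sp}^U)$. I will treat the labeled term and the unlabeled term separately, since they differ in their normalization ($1/|\mathcal{S}_{L_j}|$ versus $1/n$).

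\textbf{Unlabeled term.} First I would write $\hat{\mathcal{R}}_{sp}^U(f)=\frac1n\sum_{i=1}^n g_f(\bm x_i)$ with $g_f(\bm x)=\sum_{j=1}^c(f^j(\bm x)-\pi_j)\in[-M,M]$ (using the stated bound $M$), so $\mathcal{R}_{sp}^U(f)=\mathbb{E}[g_f(\bm x)]$. Changing one sample $\bm x_i$ alters $\sup_f|\mathcal{R}_{sp}^U(f)-\hat{\mathcal{R}}_{sp}^U(f)|$ by at most $2M/n$ — wait, more carefully, since $g_f$ ranges in an interval of length $2M$, a single-coordinate change moves the empirical average by at most $2M/n$; I would double-check the paper's constant (they write $\frac{M}{2}\sqrt{\log(2/\delta)/2n}$, consistent with a bounded-difference constant of order $M/n$ rather than $2M/n$, presumably because they normalize $g_f\in[0,M]$ or absorb a factor). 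Applying McDiarmid gives, with probability $\ge 1-\delta/2$ say, a deviation $\le \mathbb{E}[\sup_f|\cdots|] + (\text{const})\cdot M\sqrt{\log(2/\delta)/2n}$. Then standard symmetrization (Mohri et al., \citep{Foundations}) gives $\mathbb{E}[\sup_f|\mathcal{R}_{sp}^U(f)-\hat{\mathcal{R}}_{sp}^U(f)|]\le 2\widetilde{\mathfrak{R}}_n(\mathcal{G}_{sp}^U)$, where I must be careful that the paper's $\widetilde{\mathfrak{R}}_n$ is defined without the $1/n$ factor (it has $\sum_i\sigma_i g(\bm x_i)$, not $\frac1n\sum_i$), so the factor $2$ and the $1/n$ conventions have to be reconciled — I would state everything in the paper's convention so the $1/n$ is hidden inside $\widetilde{\mathfrak{R}}_n$ via the definition's $g$, i.e.\ absorb it by noting their $\widetilde{\mathfrak{R}}_n$ effectively already scales like $n\cdot(\text{usual }\mathfrak{R}_n)$; this bookkeeping is the fussy part.

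\textbf{Labeled term.} This one is the main obstacle because of the class-dependent normalization: $\hat{\mathcal{R}}_{sp}^L(f)=\sum_{j=1}^c\frac{2\pi_j}{|\mathcal{S}_{L_j}|}\sum_{\bm x\in\mathcal{S}_{L_j}}(1-f^j(\bm x))$, and the sets $\mathcal{S}_{L_j}$ have different (random) sizes, so it is not a clean i.i.d.\ average over the $n$ samples. The cleanest route is to rewrite it as an average over all $n$ samples using the indicator $l_i^j$: $\hat{\mathcal{R}}_{sp}^L(f)=\frac1n\sum_{i=1}^n h_f(\bm x_i,\bm l_i)$ where $h_f(\bm x,\bm l)=\sum_{j=1}^c \frac{2\pi_j n}{|\mathcal{S}_{L_j}|} l_j(1-f^j(\bm x))$ — but the factor $n/|\mathcal{S}_{L_j}|$ couples samples, so instead I would condition on the label pattern $\{\bm l_i\}$ (equivalently on the counts $n_j^p=|\mathcal{S}_{L_j}|$) and the observed label assignment, and run McDiarmid over the $\bm x_i$'s alone with the weights $\frac{2\pi_j}{|\mathcal{S}_{L_j}|}$ fixed. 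Flipping one $\bm x_i$ then changes $\sup_f|\cdots|$ by at most $\max_j \frac{2\pi_j}{|\mathcal{S}_{L_j}|}\cdot(\text{range of }1-f^j)\le \frac{M}{\min_j|\mathcal{S}_{L_j}|}$ up to constants, which is exactly where the $\frac{1}{\min_j|\mathcal{S}_{L_j}|}$ in the target bound comes from; McDiarmid then yields the $\frac{M}{2\min_j|\mathcal{S}_{L_j}|}\sqrt{\log(2/\delta)/2n}$ term. Symmetrization of the remaining expectation against Rademacher variables $\sigma_i$ on the $\bm x_i$ gives $\frac{2}{C}\widetilde{\mathfrak{R}}_n(\mathcal{G}_{sp}^L)$, where $C$ is the constant that reconciles the normalization $1/|\mathcal{S}_{L_j}|$ with the $1/n$ implicit in the definition of $\widetilde{\mathfrak{R}}_n(\mathcal{G}_{sp}^L)$ (concretely $C\sim \min_j|\mathcal{S}_{L_j}|/n$, the empirical fraction of positively-labeled samples). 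Finally I would combine the two one-sided statements, each holding with probability $\ge 1-\delta/2$, via a union bound to get both simultaneously with probability $\ge 1-\delta$ — though as stated the lemma lists them separately, so a union bound may not even be needed and I would just prove each at level $\delta$. The delicate points throughout are the constant-tracking (the $C$, the factors of $2$, whether $\widetilde{\mathfrak{R}}_n$ carries a $1/n$) and justifying the conditioning-on-$\bm l$ step so that McDiarmid applies cleanly to the labeled term.
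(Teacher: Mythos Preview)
Your proposal is correct and follows essentially the same route as the paper: McDiarmid's bounded-difference inequality for the concentration term, followed by ghost-sample symmetrization to obtain the Rademacher complexity, treating the labeled and unlabeled parts separately. The paper's proof differs only in that it applies McDiarmid directly to the pairs $(\bm x_i,\bm l_i)$ rather than conditioning on the label pattern first (your conditioning step is arguably more careful, since replacing a pair can change $|\mathcal{S}_{L_j}|$); also note that the paper's constant is $C=\min_j\mathbb{E}_{\tilde{\mathcal{D}}}[\sum_i l_i^j]$, an expected \emph{count}, not a fraction, which resolves the $1/n$ bookkeeping you flagged with $\widetilde{\mathfrak{R}}_n$.
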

where $ \mathcal{R}_{sp}^L(f) = \sum_{j=1}^c 2\pi_j \mathbb{E}_{\bm x \sim p(\bm x \vert y_j = 1)}\left[ 1 - f^j(\bm x) \right] $, $ \mathcal{R}_{sp}^U(f) = \mathbb{E}_{\bm x \sim p(\bm x)} \sum_{j=1}^{c}\left[f^j(\bm x)\right] - \pi_j $ and $ C = \min_{j} \mathbb{E}_{\tilde{\mathcal{D}}} \left[ \sum_{i=1}^{n} l_i^j \right] $ is a constant.

\begin{proof}
  Suppose an example $ (\bm x, \bm l) $ is replaced by another arbitrary example $ (\bm x', \bm l') $, then the change of $ \sup_{f\in\mathcal{F}} \mathcal{R}_{sp}^L(f) - \hat{\mathcal{R}}_{sp}^L (f) $ is no greater than $ \frac{M}{2n \min_{j}\vert \mathcal{S}_{L_j} \vert} $. 
  By applying McDiarmid's inequality, for any $ \delta > 0 $, with probility at least $ 1 - \frac{\delta}{2} $, 
  \begin{equation*}
    \begin{aligned}
      \sup_{f\in\mathcal{F}} \mathcal{R}_{sp}^L(f) - \hat{\mathcal{R}}_{sp}^L (f) \leq \mathbb{E}\left[ \sup_{f\in\mathcal{F}} \mathcal{R}_{sp}^L(f) - \hat{\mathcal{R}}_{sp}^L (f) \right] + \frac{M}{2 \min_{j}\vert \mathcal S_{L_j} \vert} \sqrt{\frac{\log\frac{2}{\delta}}{2n}}.
    \end{aligned}
  \end{equation*}
  By symmetry, we can obtain
  \begin{equation*}
    \begin{aligned}
      \sup_{f\in\mathcal{F}} \vert \mathcal{R}_{sp}^L(f) - \hat{\mathcal{R}}_{sp}^L (f) \vert \leq \mathbb{E}\left[ \sup_{f\in\mathcal{F}} \mathcal{R}_{sp}^L(f) - \hat{\mathcal{R}}_{sp}^L (f) \right] + \frac{M}{2 \min_{j}\vert \mathcal{S}_{L_j} \vert} \sqrt{\frac{\log\frac{2}{\delta}}{2n}}.
    \end{aligned}
  \end{equation*}

  Next is to bound the term $ \mathbb{E}\left[ \sup_{f\in\mathcal{F}} \mathcal{R}_{sp}^L(f) - \hat{\mathcal{R}}_{sp}^L (f) \right] $:
\begin{equation*}
	\begin{aligned}
		&\mathbb{E}\left[ \sup_{f\in\mathcal{F}} \mathcal{R}_{sp}^L(f) - \hat{\mathcal{R}}_{sp}^L (f) \right] = \mathbb{E}_{\tilde{\mathcal{D}}}\left[ \sup_{f\in\mathcal{F}} \mathcal{R}_{sp}^L(f) - \hat{\mathcal{R}}_{sp}^L (f) \right] \\
		&= \mathbb{E}_{\tilde{\mathcal{D}}}\left[ \sup_{f\in\mathcal{F}} \mathbb{E}_{\tilde{\mathcal
		D}^\prime} \left[  \hat{\mathcal{R}}_{sp}^{\prime L} (f) - \hat{\mathcal{R}}_{sp}^L (f) \right] \right] \\
		&\leq \mathbb{E}_{\tilde{\mathcal{D}}, \tilde{\mathcal{D}^\prime}}\left[ \sup_{f\in\mathcal{F}} \left[  \hat{\mathcal{R}}_{sp}^{\prime L}(f) - \hat{\mathcal{R}}_{sp}^L (f) \right] \right] \\
		&= \mathbb{E}_{\tilde{\mathcal{D}}, \tilde{\mathcal{D}^\prime}, \bm\sigma}\left[ \sup_{f\in\mathcal{F}} \sum_{i=1}^n\sum_{j=1}^c 
		\sigma_i \left( \frac{2\pi_j}{\sum_{i=1}^n {l^\prime}_i^j} {l^\prime}_i^j \left(1 - f^j(\bm x_i^\prime)\right)
		-  \frac{2\pi_j}{\sum_{i=1}^n {l}_i^j} {l}_i^j \left(1 - f^j(\bm x_i)\right) \right)
		 \right] \\
		&\leq \mathbb{E}_{\tilde{\mathcal{D}^\prime}, \bm\sigma}\left[ \sup_{f\in\mathcal{F}} \sum_{i=1}^n\sum_{j=1}^c 
		\sigma_i \left( \frac{2\pi_j}{\sum_{i=1}^n {l^\prime}_i^j} {l^\prime}_i^j \left(1 - f^j(\bm x_i^\prime)\right) \right) \right] \\
		& + \mathbb{E}_{\tilde{\mathcal{D}}, \bm\sigma}\left[ \sup_{f\in\mathcal{F}} \sum_{i=1}^n\sum_{j=1}^c 
		\sigma_i \left( \frac{2\pi_j}{\sum_{i=1}^n {l}_i^j} {l}_i^j \left(1 - f^j(\bm x_i)\right) \right) \right] \\
		& \leq \frac{1}{C} \mathbb{E}_{\tilde{\mathcal{D}^\prime}, \bm\sigma}\left[ \sup_{f\in\mathcal{F}} \sum_{i=1}^n\sum_{j=1}^c 
		\sigma_i \left( 2\pi_j {l^\prime}_i^j \left(1 - f^j(\bm x_i^\prime)\right) \right) \right] \\
		& + \frac{1}{C} \mathbb{E}_{\tilde{\mathcal{D}}, \bm\sigma}\left[ \sup_{f\in\mathcal{F}} \sum_{i=1}^n\sum_{j=1}^c 
		\sigma_i \left( 2\pi_j {l}_i^j \left(1 - f^j(\bm x_i)\right) \right) \right] \\
		& = \frac{2}{C}	\widetilde{\mathfrak{R}}_n\left(\mathcal{G}_{sp}^L\right),
	\end{aligned}
\end{equation*}
where $ C $ is a constant that $ C = \min_{j} \mathbb{E}_{\tilde{\mathcal{D}}} \left[ \sum_{i=1}^{n} y_i^j \right] $. Then we have:
\begin{equation*}
  \begin{aligned}
    \sup_{f\in\mathcal{F}} \vert \mathcal{R}_{sp}^L(f) - \hat{\mathcal{R}}_{sp}^L (f) \vert \leq \frac{2}{C}	\widetilde{\mathfrak{R}}_n\left(\mathcal{G}_{sp}^L\right) + \frac{M}{2 \min_{j}\vert \mathcal{S}_{L_j} \vert} \sqrt{\frac{\log\frac{2}{\delta}}{2n}}. \\
  \end{aligned}
\end{equation*}

Similarly, we can obtain:
\begin{equation*}
  \begin{aligned}
    \sup_{f\in\mathcal{F}} \vert \mathcal{R}_{sp}^U(f) - \hat{\mathcal{R}}_{sp}^U (f) \vert \leq 2	\widetilde{\mathfrak{R}}_n\left(\mathcal{G}_{sp}^U\right) + \frac{M}{2} \sqrt{\frac{\log\frac{2}{\delta}}{2n}},
  \end{aligned}
\end{equation*}

\end{proof}

\begin{lemma}\label{lemma:lemma2}
  Define $ \rho = \max_j 2\pi_j $, $ \mathcal{H}_j = \left\{ h: \bm x \mapsto f^j(\bm x) \vert f\in \mathcal{F} \right\} $ and $ \mathfrak{R}_n\left(\mathcal{H}_j\right)=\mathbb{E}_{p(\bm{x})} \mathbb{E}_{\bm{\sigma}}\left[\sup _{h \in \mathcal{H}_j} \frac{1}{n} \sum_{i=1}^n h\left(\bm{x}_i\right)\right] $. 
  Then, we have with Rademacher vector contraction inequality:
  \begin{equation*}
    \begin{aligned}
      \widetilde{\mathfrak{R}}_n\left(\mathcal{G}_{sp}^L\right) \leq \sqrt{2}\rho \sum_{j=1}^{c}\mathfrak{R}_n(\mathcal{H}_j), \qquad
      \widetilde{\mathfrak{R}}_n\left(\mathcal{G}_{sp}^U\right) \leq \sqrt{2} \sum_{j=1}^{c}\mathfrak{R}_n(\mathcal{H}_j),
    \end{aligned}
  \end{equation*}
\end{lemma}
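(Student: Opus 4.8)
The plan is to derive both inequalities from Maurer's Rademacher vector-contraction inequality, which is exactly the tool named in the statement. The common structure is that every $g \in \mathcal{G}_{sp}^L$ (resp. $\mathcal{G}_{sp}^U$) is, for each fixed index $i$, a transformation of the vector $\bm f(\bm x_i) := (f^1(\bm x_i), \dots, f^c(\bm x_i)) \in \mathbb{R}^c$; the contraction inequality lets me peel off that transformation at the price of a factor $\sqrt{2}$ times its Lipschitz constant, converting $\widetilde{\mathfrak{R}}_n$ of the composed class into a doubly-indexed Rademacher average of $\bm f$, which then splits across the $c$ coordinates into the scalar averages defining $\mathfrak{R}_n(\mathcal{H}_j)$.

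For $\mathcal{G}_{sp}^L$ I would write $g(\bm x_i, \bm l_i) = \varphi_{\bm l_i}(\bm f(\bm x_i))$ where $\varphi_{\bm l}(\bm u) = \sum_{j=1}^c 2\pi_j l_j (1 - u_j)$ is affine in $\bm u$. Its gradient is $(-2\pi_j l_j)_{j=1}^c$, so its Lipschitz constant with respect to the Euclidean norm is $(\sum_{j=1}^c 4\pi_j^2 l_j^2)^{1/2}$. Here the single-positive assumption is essential: exactly one coordinate of $\bm l_i$ equals $1$, so the sum collapses to a single term and the constant is $2\pi_{\gamma_i} \le \max_j 2\pi_j = \rho$, uniformly in $i$. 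Applying the vector-contraction inequality with this constant and then bounding the supremum of the resulting doubly-indexed sum by the sum over $j$ of the per-coordinate suprema yields $\widetilde{\mathfrak{R}}_n(\mathcal{G}_{sp}^L) \le \sqrt{2}\,\rho \sum_{j=1}^c \mathfrak{R}_n(\mathcal{H}_j)$.

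For $\mathcal{G}_{sp}^U$ I would proceed identically with $\varphi(\bm u) = \sum_{j=1}^c (u_j - \pi_j)$. The additive constant $-\sum_j \pi_j$ — and, once the sum over $j$ is split out, each per-label shift $-\pi_j$ — is independent of $f$, so it factors out of the supremum and contributes zero in expectation over the Rademacher variables; hence the effective Lipschitz constant is $1$, and the same two steps give $\widetilde{\mathfrak{R}}_n(\mathcal{G}_{sp}^U) \le \sqrt{2} \sum_{j=1}^c \mathfrak{R}_n(\mathcal{H}_j)$.

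Since the contraction inequality is invoked as a black box, the substance is not a long computation but getting two things right. The only genuinely non-routine point is pinning down the Lipschitz constants: recognizing that it is precisely the single-positive constraint on $\bm l_i$ that keeps the constant at $\rho$ rather than the $\sqrt{c}\,\rho$ one would obtain under general multi-label supervision. The second point is the bookkeeping in passing from the $\mathbb{R}^c$-valued doubly-indexed Rademacher complexity back to $\sum_j \mathfrak{R}_n(\mathcal{H}_j)$ — using subadditivity of the supremum over the $c$ coordinates and keeping the $1/n$ normalization consistent between $\widetilde{\mathfrak{R}}_n$ and $\mathfrak{R}_n$ — which is straightforward but easy to slip on.
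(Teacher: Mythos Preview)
Your approach is the one the paper intends --- the lemma is stated there without a detailed proof, only the vector-contraction inequality is named --- and your treatment of $\mathcal{G}_{sp}^L$ is correct, with the single-positive constraint on $\bm l_i$ being exactly what pins the per-sample Lipschitz constant at $2\pi_{\gamma_i}\le\rho$ rather than $\sqrt{c}\,\rho$.

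One point needs tightening in the $\mathcal{G}_{sp}^U$ argument. If you literally ``proceed identically'' and apply Maurer's vector contraction to $\varphi(\bm u)=\sum_{j=1}^c(u_j-\pi_j)$ as a map $\mathbb{R}^c\to\mathbb{R}$, its Euclidean Lipschitz constant is $\|\nabla\varphi\|_2=\sqrt{c}$, not $1$; dropping the additive constant does not change that. What actually delivers the stated bound (in fact without the $\sqrt{2}$) is the direct route you allude to in the same sentence: split the sum over $j$ first via subadditivity of the supremum, and for each fixed $j$ the remaining class is $\{\bm x\mapsto f^j(\bm x)-\pi_j : f\in\mathcal{F}\}$, whose Rademacher average equals that of $\mathcal{H}_j$ because the shift $-\pi_j$ is constant in $f$ and annihilated by $\mathbb{E}_{\bm\sigma}$. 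So no contraction step is needed for $\mathcal{G}_{sp}^U$, and the two cases are not handled by ``the same two steps'' in the same order. Aside from this phrasing issue the argument is sound; your caution about the $1/n$ normalization mismatch between $\widetilde{\mathfrak{R}}_n$ and $\mathfrak{R}_n$ is also warranted, since the paper's definitions as written differ by that factor.
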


Based on Lemma \ref{lemma:lemma1} and Lemma \ref{lemma:lemma2}, we could obtain the following theorem.
\begin{theorem}
  Assume the loss function $ \mathcal{L}_{sp}^L = \sum_{j=1}^c 2\pi_j l_j \left(1 - f^j(\bm x)\right) $ and $ \mathcal{L}_{sp}^U = \sum_{j=1}^c \left(f^j(\bm x) - \pi_j \right) $  could be bounded by $ M $, i.e., $ M=\sup_{\bm x\in \mathcal{X}, f\in\mathcal{F}, \bm l\in \mathcal{Y}} \max(\mathcal{L}_{sp}^L(f(\bm x), \bm l), \mathcal{L}_{sp}^U(f(\bm x), \bm y)) $, with probability at least $ 1-\delta $, we have:
  \begin{equation*}
    \begin{aligned}
      \mathcal{R}(\hat{f}_{sp}) - \mathcal{R}(f^\star) & \leq \frac{4}{C} \sum_{j=1}^{c}\widetilde{\mathfrak{R}}_n\left(\mathcal{G}_{sp}^L\right) + \frac{M}{\min_{j}\vert \mathcal S_{L_j} \vert} \sqrt{\frac{\log\frac{4}{\delta}}{2n}}
      +  4 \widetilde{\mathfrak{R}}_n\left(\mathcal{G}_{sp}^U\right) + M \sqrt{\frac{\log\frac{4}{\delta}}{2n}} \\
      &\leq \frac{4\sqrt{2}\rho}{C} \sum_{j=1}^{c}\mathfrak{R}_n(\mathcal{H}_j) + \frac{M}{\min_{j}\vert \mathcal S_{L_j} \vert} \sqrt{\frac{\log\frac{4}{\delta}}{2n}} 
      + 4\sqrt{2} \sum_{j=1}^{c}\mathfrak{R}_n(\mathcal{H}_j) + M \sqrt{\frac{\log\frac{4}{\delta}}{2n}}.
    \end{aligned}
  \end{equation*}
\end{theorem}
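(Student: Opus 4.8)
The plan is to derive the estimation error bound by the standard decomposition of the excess risk into a supremum deviation term plus an optimization term, combined with the uniform deviation bounds already established in Lemmas~\ref{lemma:lemma1} and~\ref{lemma:lemma2}. First I would note that, since $\widehat{\mathcal{R}}_{sp}(f) = \widehat{\mathcal{R}}_{sp}^L(f) + \widehat{\mathcal{R}}_{sp}^U(f)$ and $\mathcal{R}(f) = \mathcal{R}_{sp}^L(f) + \mathcal{R}_{sp}^U(f)$ (the latter being exactly the rewritten risk of Eq.~(\ref{eq:full_label_loss_func2}), which holds for the \emph{true} class-priors), the key inequality is
\begin{equation*}
  \mathcal{R}(\hat{f}_{sp}) - \mathcal{R}(f^\star) \leq 2\sup_{f\in\mathcal{F}}\bigl|\widehat{\mathcal{R}}_{sp}(f) - \mathcal{R}(f)\bigr|,
\end{equation*}
which follows because $\hat{f}_{sp}$ minimizes $\widehat{\mathcal{R}}_{sp}$ over $\mathcal{F}$ so $\widehat{\mathcal{R}}_{sp}(\hat{f}_{sp}) \leq \widehat{\mathcal{R}}_{sp}(f^\star)$, and then inserting and subtracting $\widehat{\mathcal{R}}_{sp}(\hat{f}_{sp})$ and $\widehat{\mathcal{R}}_{sp}(f^\star)$ and bounding each of the two resulting gaps by the supremum.

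Next I would split the supremum using the triangle inequality into the labeled part and the unlabeled part, $\sup_{f}|\widehat{\mathcal{R}}_{sp}(f) - \mathcal{R}(f)| \leq \sup_f|\widehat{\mathcal{R}}_{sp}^L(f) - \mathcal{R}_{sp}^L(f)| + \sup_f|\widehat{\mathcal{R}}_{sp}^U(f) - \mathcal{R}_{sp}^U(f)|$, and apply Lemma~\ref{lemma:lemma1} to each term. This requires a union bound: I would invoke Lemma~\ref{lemma:lemma1} with confidence parameter $\delta/2$ for each of the two statements (the lemma as stated already carries a $\delta/2$ internally from the symmetrization, so I would be careful to track the constants so the final statement holds with probability $1-\delta$ and has $\log(4/\delta)$ under the square roots). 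Summing the two bounds from Lemma~\ref{lemma:lemma1} and multiplying by the factor $2$ from the excess-risk decomposition gives
\begin{equation*}
  \mathcal{R}(\hat{f}_{sp}) - \mathcal{R}(f^\star) \leq \frac{4}{C}\widetilde{\mathfrak{R}}_n(\mathcal{G}_{sp}^L) + \frac{M}{\min_j|\mathcal{S}_{L_j}|}\sqrt{\frac{\log(4/\delta)}{2n}} + 4\widetilde{\mathfrak{R}}_n(\mathcal{G}_{sp}^U) + M\sqrt{\frac{\log(4/\delta)}{2n}},
\end{equation*}
which is the first line of the claimed bound. The second line then follows immediately by substituting the contraction estimates $\widetilde{\mathfrak{R}}_n(\mathcal{G}_{sp}^L) \leq \sqrt{2}\rho\sum_{j=1}^c \mathfrak{R}_n(\mathcal{H}_j)$ and $\widetilde{\mathfrak{R}}_n(\mathcal{G}_{sp}^U) \leq \sqrt{2}\sum_{j=1}^c \mathfrak{R}_n(\mathcal{H}_j)$ from Lemma~\ref{lemma:lemma2}.

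The main obstacle I anticipate is bookkeeping rather than conceptual: getting the confidence-level accounting exactly right through the two applications of McDiarmid (one per loss term, each already absorbing a factor of two in $\delta$ inside Lemma~\ref{lemma:lemma1}), so that the union over the labeled and unlabeled deviation events yields a clean $1-\delta$ and the $\log(4/\delta)$ factors line up with what is written. A secondary subtlety worth stating explicitly is that the decomposition $\mathcal{R}(f) = \mathcal{R}_{sp}^L(f) + \mathcal{R}_{sp}^U(f)$ relies on Eq.~(\ref{eq:full_label_loss_func2}) being an \emph{exact} rewriting of $\mathcal{R}(f)$ with the absolute loss and the \emph{ground-truth} priors $\pi_j = p(y_j=1)$; the theorem as stated concerns the empirical minimizer of Eq.~(\ref{eq:empirical_risk_estimator}) (without the absolute-value wrapper and without the debiasing logit shift of Eq.~(\ref{eq:empirical_risk_estimator_bias})), so the argument is internally consistent as long as one works with that version of the estimator. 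No finer probabilistic tools are needed beyond McDiarmid, symmetrization, and the vector-contraction inequality, all of which are already deployed in the lemmas.
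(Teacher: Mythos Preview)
Your proposal is correct and follows essentially the same route as the paper: the standard add--subtract excess-risk decomposition using $\widehat{\mathcal{R}}_{sp}(\hat{f}_{sp}) \leq \widehat{\mathcal{R}}_{sp}(f^\star)$, splitting into the $L$ and $U$ parts, bounding each by $2\sup_f|\cdot|$, invoking Lemma~\ref{lemma:lemma1} on each with a union bound (hence $\log(4/\delta)$), and finishing with the contraction bounds of Lemma~\ref{lemma:lemma2}. Your remarks on the confidence-level bookkeeping and on working with the estimator of Eq.~(\ref{eq:empirical_risk_estimator}) under the true priors are accurate and match the paper's implicit assumptions.
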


\begin{proof}
  \begin{equation*}
    \begin{aligned}
      \mathcal{R}(\hat{f}_{sp}) - \mathcal{R}(f^\star) &= \mathcal{R}(\hat{f}_{sp}) - \hat{\mathcal{R}}_{sp}(\hat{f}) + \hat{\mathcal{R}}_{sp}(\hat{f}) - \hat{\mathcal{R}}_{sp}(f^\star) + \hat{\mathcal{R}}_{sp}(f^\star) - \mathcal{R}(f^\star) \\
      & \leq \mathcal{R}(\hat{f}_{sp}) - \hat{\mathcal{R}}_{sp}(\hat{f}) + \hat{\mathcal{R}}_{sp}(f^\star) - \mathcal{R}(f^\star) \\
      & = \mathcal{R}_{sp}^L (\hat{f}_{sp}) - \hat{\mathcal{R}}_{sp}^L(\hat{f}) + \hat{\mathcal{R}}_{sp}^L (f^\star) - \mathcal{R}_{sp}^L (f^\star) \\
      & + \mathcal{R}_{sp}^U (\hat{f}_{sp}) - \hat{\mathcal{R}}_{sp}^U(\hat{f}) + \hat{\mathcal{R}}_{sp}^U (f^\star) - \mathcal{R}_{sp}^U (f^\star) \\
      & \leq 2 \sup_{f\in\mathcal{F}} \vert \mathcal{R}_{sp}^L(f) - \hat{\mathcal{R}}_{sp}^L (f) \vert + 2 \sup_{f\in\mathcal{F}} \vert \mathcal{R}_{sp}^U(f) - \hat{\mathcal{R}}_{sp}^U (f) \vert \\
      & \leq \frac{4}{C} \widetilde{\mathfrak{R}}_n\left(\mathcal{G}_{sp}^L\right) + \frac{M}{\min_{j}\vert \mathcal S_{L_j} \vert} \sqrt{\frac{\log\frac{4}{\delta}}{2n}}
      +  4 \widetilde{\mathfrak{R}}_n\left(\mathcal{G}_{sp}^U\right) + M \sqrt{\frac{\log\frac{4}{\delta}}{2n}} \\
      &\leq \frac{4\sqrt{2}\rho}{C} \sum_{j=1}^{c}\mathfrak{R}_n(\mathcal{H}_j) + \frac{M}{\min_{j}\vert \mathcal S_{L_j} \vert} \sqrt{\frac{\log\frac{4}{\delta}}{2n}} 
      + 4\sqrt{2} \sum_{j=1}^{c}\mathfrak{R}_n(\mathcal{H}_j) + M \sqrt{\frac{\log\frac{4}{\delta}}{2n}}.
    \end{aligned}
  \end{equation*}
\end{proof}

\begin{table}[t]
    \centering
    \caption{Characteristics of the MLIC datasets.}
    \label{image_ds}
    \begin{tabular}{ccccc}
    \toprule
    Dataset & \#Training & \#Validation & \#Testing & \#Classes \\ \midrule
    VOC     & 4574       & 1143         & 5823      & 20       \\
    COCO    & 65665      & 16416        & 40137     & 80       \\
    NUS     & 120000     & 30000        & 60260     & 81       \\
    CUB     & 4795       & 1199         & 5794      & 312      \\ \bottomrule
    \end{tabular}
\end{table}

\begin{table}[t]
    \centering
    \caption{Characteristics of the MLL datasets.}
    \label{mll_ds}
    \begin{tabular}{ccccc}
    \toprule
    Dataset     & \#Examples & \#Features & \#Classes & \#Domain \\ \midrule
    Image       & 2000       & 294        & 5       & Images   \\
    Scene       & 2407       & 294        & 6       & Images   \\
    Yeast       & 2417       & 103        & 14      & Biology  \\
    Corel5k     & 5000       & 499        & 374     & Images  \\
    Mirflickr   & 24581      & 1000       & 38      & Images  \\
    Delicious   & 16091      & 500        & 983     & Text \\ \bottomrule
    \end{tabular}
\end{table}

\begin{table*}[t]
    \caption{Predictive performance of each comparing method on MLL datasets in terms of \textit{Coverage} (mean $ \pm $ std). The best performance is highlighted in bold (the smaller the better).}
        \label{coverage}
        \centering
        \resizebox{0.8\linewidth}{!}
        {
        \begin{tabular}{ccccccc}
        \toprule
                     & Image       & Scene       & Yeast       & Corel5k    & Mirflickr   & Delicious           \\ \midrule
        {\an}           & 0.374$\pm$0.050          & 0.279$\pm$0.094          & 0.707$\pm$0.045          & 0.330$\pm$0.001          & 0.342$\pm$0.003          & 0.653$\pm$0.001          \\
        {\anls}          & 0.334$\pm$0.033          & 0.217$\pm$0.052          & 0.703$\pm$0.012          & 0.441$\pm$0.009          & 0.433$\pm$0.015          & 0.830$\pm$0.016          \\
        {\wan}           & 0.313$\pm$0.040          & 0.192$\pm$0.019          & 0.512$\pm$0.045          & 0.309$\pm$0.001          & 0.334$\pm$0.002          & 0.632$\pm$0.001          \\
        {\epr}          & 0.352$\pm$0.043          & 0.254$\pm$0.046          & 0.506$\pm$0.011          & 0.328$\pm$0.001          & 0.332$\pm$0.002          & 0.637$\pm$0.001          \\
        {\role}          & 0.306$\pm$0.049          & 0.157$\pm$0.023          & 0.519$\pm$0.026          & 0.551$\pm$0.007          & 0.448$\pm$0.028          & 0.887$\pm$0.004          \\
        {\emm}            & 0.407$\pm$0.036          & 0.281$\pm$0.096          & 0.575$\pm$0.042          & 0.382$\pm$0.005          & 0.359$\pm$0.010          & 0.753$\pm$0.004          \\
        {\emapl}       & 0.438$\pm$0.022          & 0.360$\pm$0.057          & 0.556$\pm$0.045          & 0.335$\pm$0.005          & 0.369$\pm$0.005          & 0.765$\pm$0.006          \\
        {\smilee}         & 0.242$\pm$0.014          & 0.146$\pm$0.037          & 0.462$\pm$0.003          & 0.308$\pm$0.007          & 0.328$\pm$0.004          & 0.628$\pm$0.003          \\
        {\smib}           & 0.265$\pm$0.002             & 0.183$\pm$0.002             & 0.476$\pm$0.004               & 0.317$\pm$0.008               & 0.329$\pm$0.331           & 0.689$\pm$0.003    \\ \midrule
        {\llr}  & 0.311$\pm$0.059 & 0.141$\pm$0.017 & 0.512$\pm$0.002 & 0.274$\pm$0.002          & 0.335$\pm$0.006 & 0.622$\pm$0.001 \\
        {\llcp} & 0.296$\pm$0.031 & 0.136$\pm$0.016 & 0.518$\pm$0.001 & \textbf{0.272$\pm$0.008} & 0.337$\pm$0.005 & 0.708$\pm$0.004          \\
        {\llct} & 0.297$\pm$0.017 & 0.161$\pm$0.031 & 0.509$\pm$0.001 & 0.277$\pm$0.005          & 0.335$\pm$0.003 & 0.708$\pm$0.002       \\ \midrule
        {\proposed} & \textbf{0.164$\pm$0.012} & \textbf{0.082$\pm$0.018} & \textbf{0.455$\pm$0.002} & 0.276$\pm$0.002 & \textbf{0.324$\pm$0.001} & \textbf{0.620$\pm$0.001} \\ \bottomrule
        \end{tabular}
        }
\end{table*}

\begin{table*}[t]
  \centering
  \caption{Predictive performance of each comparing methods on MLL datasets in terms of \textit{Hamming loss} (mean $\pm$ std). The best performance is highlighted in bold (the smaller the better).}
  
  \label{hamming}
  \resizebox{0.8\columnwidth}{!}{%
  \begin{tabular}{@{}ccccccc@{}}
  \toprule
                    & Image       & Scene       & Yeast       & Corel5k    & Mirflickr   & Delicious    \\ \midrule
  {\an}      & 0.229$\pm$0.000    & 0.176$\pm$0.001    & 0.306$\pm$0.000          & \textbf{0.010$\pm$0.000} & 0.127$\pm$0.000        & \textbf{0.019$\pm$0.000} \\
  {\anls}   & 0.229$\pm$0.000    & 0.168$\pm$0.004    & 0.306$\pm$0.000          & \textbf{0.010$\pm$0.000} & 0.127$\pm$0.000        & \textbf{0.019$\pm$0.000} \\
  {\wan}    & 0.411$\pm$0.060    & 0.299$\pm$0.035    & 0.285$\pm$0.016          & 0.156$\pm$0.001          & 0.191$\pm$0.006        & 0.102$\pm$0.000          \\
  {\epr}     & 0.370$\pm$0.043    & 0.220$\pm$0.026    & 0.234$\pm$0.007          & 0.016$\pm$0.000          & 0.136$\pm$0.002        & 0.020$\pm$0.000          \\
  {\role}    & 0.256$\pm$0.018    & 0.176$\pm$0.017    & 0.279$\pm$0.010          & \textbf{0.010$\pm$0.000} & 0.128$\pm$0.000        & \textbf{0.019$\pm$0.000} \\
  {\emm}      & 0.770$\pm$0.001    & 0.820$\pm$0.003    & 0.669$\pm$0.025          & 0.589$\pm$0.003          & 0.718$\pm$0.010        & 0.630$\pm$0.005          \\
  {\emapl} & 0.707$\pm$0.088    & 0.780$\pm$0.082    & 0.641$\pm$0.032          & 0.648$\pm$0.006          & 0.754$\pm$0.017        & 0.622$\pm$0.006          \\
  {\smilee}   & 0.219$\pm$0.009    & 0.182$\pm$0.021    & \textbf{0.208$\pm$0.002} & \textbf{0.010$\pm$0.000} & 0.127$\pm$0.001        & 0.081$\pm$0.008          \\
  {\smib}     & 0.179$\pm$0.004       & 0.211$\pm$0.004       & 0.305$\pm$0.000               & \textbf{0.010$\pm$0.000}      & 0.127$\pm$0.000            & \textbf{0.019$\pm$0.000}          \\ \midrule
  {\llr}  & 0.220$\pm$0.013 & 0.162$\pm$0.005 & 0.312$\pm$0.001 & 0.015$\pm$0.001 & 0.124$\pm$0.002 & \textbf{0.019$\pm$0.000} \\
  {\llcp} & 0.218$\pm$0.016 & 0.164$\pm$0.002 & 0.306$\pm$0.000 & 0.016$\pm$0.001 & 0.126$\pm$0.001 & \textbf{0.019$\pm$0.000} \\
  {\llct} & 0.246$\pm$0.031 & 0.176$\pm$0.019 & 0.321$\pm$0.001 & 0.018$\pm$0.001 & 0.124$\pm$0.001 & \textbf{0.019$\pm$0.000} \\ \midrule
  {\proposed} & \textbf{0.165$\pm$0.023} & \textbf{0.140$\pm$0.013} & 0.211$\pm$0.001 & \textbf{0.010$\pm$0.000} & \textbf{0.121$\pm$0.002} & \textbf{0.019$\pm$0.000} \\ \bottomrule
  \end{tabular}%
  }
  \end{table*}

  \begin{table*}[t]
  \caption{Predictive performance of each comparing methods on MLL datasets in terms of \textit{One-error} (mean $ \pm $ std). The best performance is highlighted in bold (the smaller the better).}
  \centering
  \label{one_error}
  \resizebox{0.8\columnwidth}{!}{%
  \begin{tabular}{@{}ccccccc@{}}
  \toprule
           & Image       & Scene       & Yeast       & Corel5k    & Mirflickr   & Delicious    \\ \midrule
   {\an}      & 0.708$\pm$0.096 & 0.626$\pm$0.123 & 0.489$\pm$0.194 & 0.758$\pm$0.002 & 0.358$\pm$0.005 & 0.410$\pm$0.012 \\
  {\anls}    & 0.643$\pm$0.052 & 0.578$\pm$0.111 & 0.495$\pm$0.130 & 0.736$\pm$0.009 & 0.360$\pm$0.015 & 0.454$\pm$0.013 \\
  {\wan}    & 0.670$\pm$0.060 & 0.543$\pm$0.060 & 0.239$\pm$0.002 & 0.727$\pm$0.012 & 0.352$\pm$0.010 & 0.404$\pm$0.002 \\
  {\epr}    & 0.703$\pm$0.046 & 0.615$\pm$0.090 & 0.240$\pm$0.003 & 0.764$\pm$0.000 & 0.362$\pm$0.015 & 0.441$\pm$0.008 \\
  {\role}  & 0.605$\pm$0.041 & 0.507$\pm$0.066 & 0.244$\pm$0.005 & 0.705$\pm$0.016 & 0.525$\pm$0.072 & 0.594$\pm$0.006 \\
  {\emm}      & 0.769$\pm$0.036 & 0.681$\pm$0.119 & 0.326$\pm$0.079 & 0.656$\pm$0.009 & 0.365$\pm$0.008 & 0.446$\pm$0.009 \\
  {\emapl}  & 0.773$\pm$0.045 & 0.812$\pm$0.059 & 0.341$\pm$0.109 & 0.690$\pm$0.007 & 0.434$\pm$0.023 & 0.405$\pm$0.006 \\
  {\smilee} & 0.533$\pm$0.036          & 0.466$\pm$0.117          & 0.250$\pm$0.012          & 0.650$\pm$0.008   & 0.340$\pm$0.010          & \textbf{0.402$\pm$0.005} \\
  {\smib}   & 0.551$\pm$0.011     & 0.471$\pm$0.010     & 0.276$\pm$0.015     & 0.708$\pm$0.020       & 0.341$\pm$0.012               & 0.529$\pm$0.012            \\ \midrule
  {\llr}  & 0.597$\pm$0.084 & 0.490$\pm$0.054 & 0.436$\pm$0.087 & 0.715$\pm$0.006 & 0.342$\pm$0.016 & 0.543$\pm$0.041 \\
  {\llcp} & 0.629$\pm$0.043 & 0.450$\pm$0.051 & 0.240$\pm$0.000 & 0.731$\pm$0.016 & 0.357$\pm$0.016 & 0.490$\pm$0.028 \\
  {\llct} & 0.616$\pm$0.019 & 0.574$\pm$0.074 & 0.552$\pm$0.097 & 0.726$\pm$0.022 & 0.375$\pm$0.012 & 0.475$\pm$0.019 \\ \midrule
  {\proposed}   & \textbf{0.325$\pm$0.026} & \textbf{0.311$\pm$0.047} & \textbf{0.227$\pm$0.004} & \textbf{0.646$\pm$0.006}          & \textbf{0.295$\pm$0.009} & \textbf{0.402$\pm$0.003}          \\ \bottomrule
  \end{tabular}%
  }
  \end{table*}

\begin{figure*}[t]
    \centering
    \begin{subfigure}{0.24\linewidth}
    \centering
    \includegraphics[width=\linewidth]{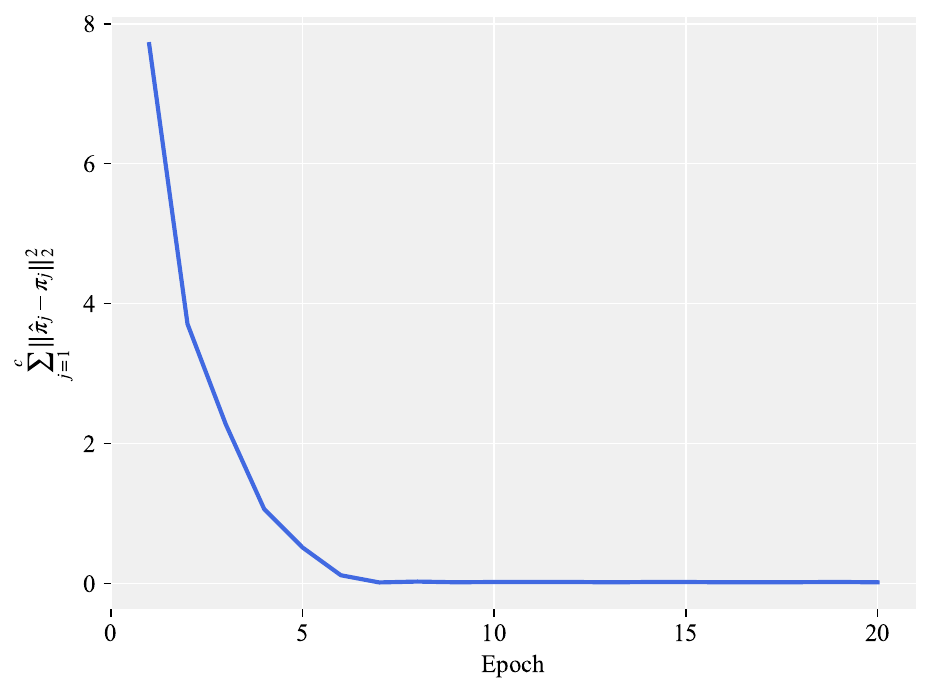}
    \caption{VOC}
    \label{voc}
    \end{subfigure}
    \hfill
    \begin{subfigure}{0.24\linewidth}
    \centering
    \includegraphics[width=\linewidth]{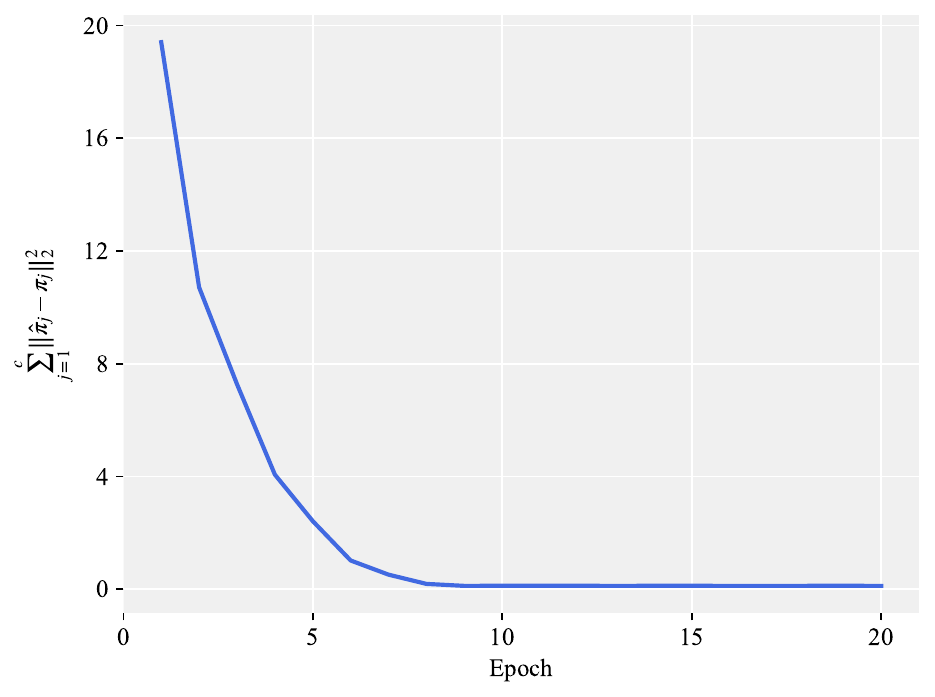}
    \caption{COCO}
    \label{coco}
    \end{subfigure}
    \hfill
    \begin{subfigure}{0.24\linewidth}
        \centering
        \includegraphics[width=\linewidth]{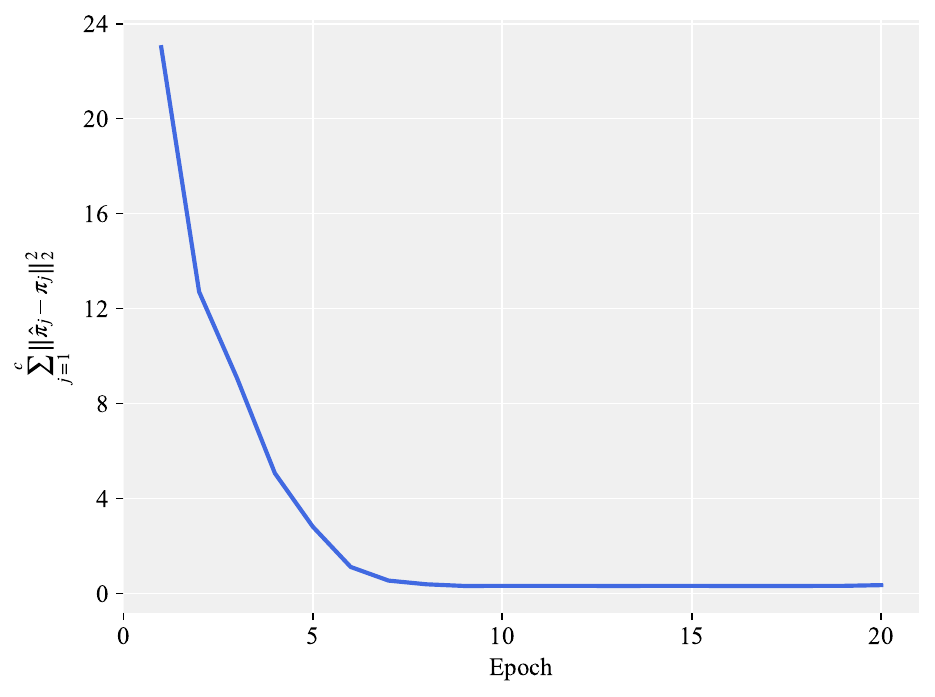}
        \caption{NUS}
        \label{nus}
    \end{subfigure}
    \hfill
    \begin{subfigure}{0.24\linewidth}
        \centering
        \includegraphics[width=\linewidth]{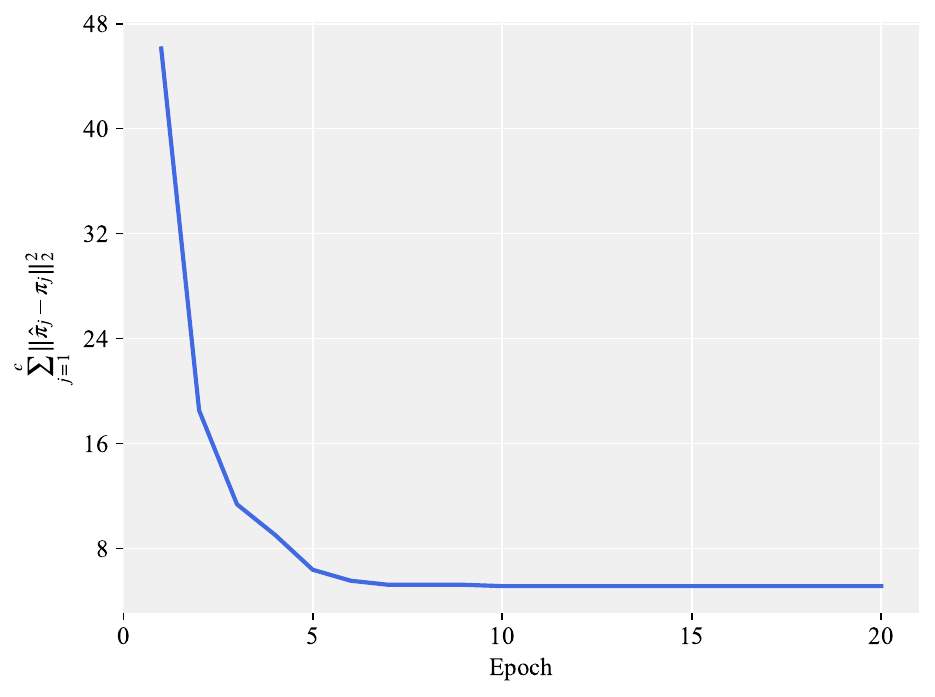}
        \caption{CUB}
        \label{cub}
    \end{subfigure}
    \caption{Convergence of $\hat{\pi}$ on four MLIC datasets.}
    \label{Convergence}
\end{figure*}

  \begin{table*}[!t]
  \centering
  \caption{Summary of the Wilcoxon signed-ranks test for \proposed~against other comparing approaches at 0.05 signiﬁcance level. The \textit{p}-values are shown in the brackets.}
  \label{Wilcoxon}
  \resizebox{\columnwidth}{!}{%
  \begin{tabular}{@{}cccccccccccccc@{}}
  \toprule
  {\proposed} against  & {\an}              & {\anls}           & {\wan}             & {\epr}            & {\role}            & {\emm}              & {\emapl}         & {\smilee} & {\smib} & {\llr} & {\llcp}   &{\llct}    \\ \midrule
  \textit{Coverage} & \textbf{win}{[}0.0313{]} & \textbf{win}{[}0.0313{]} & \textbf{win}{[}0.0313{]} & \textbf{win}{[}0.0313{]} & \textbf{win}{[}0.0313{]} & \textbf{win}{[}0.0313{]} & \textbf{win}{[}0.0313{]} & \textbf{win}{[}0.0313{]} & \textbf{tie}{[}0.0679{]} & \textbf{win}{[}0.0431{]} & \textbf{win}{[}0.0431{]} & \textbf{win}{[}0.0431{]} \\
  \textit{One-error}    & \textbf{win}{[}0.0313{]} & \textbf{win}{[}0.0313{]} & \textbf{win}{[}0.0313{]} & \textbf{win}{[}0.0313{]} & \textbf{win}{[}0.0313{]} &  \textbf{win}{[}0.0313{]} & \textbf{win}{[}0.0313{]} & \textbf{win}{[}0.0431{]} & \textbf{win}{[}0.0313{]} & \textbf{tie}{[}0.625{]} & \textbf{win}{[}0.0313{]} & \textbf{win}{[}0.0313{]} \\
  \textit{Ranking loss} & \textbf{win}{[}0.0313{]} & \textbf{win}{[}0.0313{]} & \textbf{win}{[}0.0313{]}  & \textbf{win}{[}0.0313{]} & \textbf{win}{[}0.0313{]} & \textbf{win}{[}0.0313{]} & \textbf{win}{[}0.0313{]} & \textbf{win}{[}0.0313{]} & \textbf{win}{[}0.0431{]} & \textbf{win}{[}0.0313{]} & \textbf{win}{[}0.0313{]} & \textbf{win}{[}0.0313{]} \\
  \textit{Hamming loss} & \textbf{tie}{[}0.0679{]} & \textbf{tie}{[}0.0679{]} & \textbf{win}{[}0.0313{]}  & \textbf{win}{[}0.0313{]} & \textbf{tie}{[}0.0679{]} & \textbf{win}{[}0.0313{]} & \textbf{win}{[}0.0313{]} & \textbf{tie}{[}0.0796{]} & \textbf{win}{[}0.0313{]} & \textbf{win}{[}0.0313{]} & \textbf{win}{[}0.0313{]} & \textbf{win}{[}0.0313{]} \\
  \textit{Average precision}     & \textbf{win}{[}0.0313{]} & \textbf{win}{[}0.0313{]} & \textbf{win}{[}0.0313{]}  & \textbf{win}{[}0.0431{]} & \textbf{win}{[}0.0313{]} & \textbf{win}{[}0.0313{]} & \textbf{win}{[}0.0313{]} & \textbf{tie}{[}0.0938{]} & \textbf{win}{[}0.0313{]} & \textbf{win}{[}0.0313{]} & \textbf{win}{[}0.0313{]} & \textbf{win}{[}0.0313{]} \\ \bottomrule
  \end{tabular}%
  }
  \end{table*}

  \begin{table*}[!t]
    \centering
    \caption{Predictive performance of {\proposed} compared with 
        the approach of estimating priors from the validation set (\textsc{Crisp-val}) on the MLL datasets for five metrics.}
    \label{val-mll}
    \resizebox{\textwidth}{!}{%
    \begin{tabular}{@{}cccccccc@{}}
    \toprule
    \textbf{} & Metrics & Image & Scene & Yeast & Corel5k & Mirflickr & Delicious \\ \midrule
    \multirow{5}{*}{\proposed} & \textit{Coverage} & \textbf{0.164$\pm$0.012} & \textbf{0.082$\pm$0.018} & \textbf{0.455$\pm$0.002} & \textbf{0.276$\pm$0.002} & \textbf{0.324$\pm$0.001} & \textbf{0.620$\pm$0.001} \\
     & \textit{Ranking Loss} & \textbf{0.164$\pm$0.027} & \textbf{0.112$\pm$0.021} & \textbf{0.164$\pm$0.001} & \textbf{0.113$\pm$0.001} & \textbf{0.118$\pm$0.001} & \textbf{0.122$\pm$0.000} \\
     & \textit{Average Precision} & \textbf{0.749$\pm$0.037} & \textbf{0.795$\pm$0.031} & \textbf{0.758$\pm$0.002} & \textbf{0.304$\pm$0.003} & \textbf{0.628$\pm$0.003} & \textbf{0.319$\pm$0.001} \\
     & \textit{Hamming Loss} & \textbf{0.165$\pm$0.023} & \textbf{0.140$\pm$0.013} & \textbf{0.211$\pm$0.001} & \textbf{0.010$\pm$0.000} & \textbf{0.121$\pm$0.002} & \textbf{0.019$\pm$0.000} \\
     & \textit{OneError} & \textbf{0.325$\pm$0.026} & \textbf{0.311$\pm$0.047} & \textbf{0.227$\pm$0.004} & \textbf{0.646$\pm$0.006} & \textbf{0.295$\pm$0.009} & \textbf{0.402$\pm$0.003} \\ \midrule
    \multirow{5}{*}{\textsc{Crisp-val}} & \textit{Coverage} & 0.193$\pm$0.009 & 0.109$\pm$0.012 & 0.456$\pm$0.004 & 0.280$\pm$0.002 & 0.330$\pm$0.001 & 0.623$\pm$0.002 \\
     & \textit{Ranking Loss} & 0.198$\pm$0.016 & 0.116$\pm$0.013 & 0.165$\pm$0.001 & 0.114$\pm$0.002 & 0.120$\pm$0.001 & \textbf{0.122$\pm$0.000} \\
     & \textit{Average Precision} & 0.725$\pm$0.004 & 0.790$\pm$0.028 & 0.753$\pm$0.006 & 0.294$\pm$0.008 & 0.622$\pm$0.001 & \textbf{0.319$\pm$0.001} \\
     & \textit{Hamming Loss} & 0.180$\pm$0.006 & 0.141$\pm$0.014 & 0.216$\pm$0.000 & \textbf{0.010$\pm$0.000} & 0.124$\pm$0.001 & \textbf{0.019$\pm$0.000} \\
     & \textit{OneError} & 0.395$\pm$0.071 & 0.359$\pm$0.050 & 0.246$\pm$0.021 & 0.666$\pm$0.008 & 0.314$\pm$0.003 & 0.444$\pm$0.001 \\ \bottomrule
    \end{tabular}%
    }
    \end{table*}

  \subsection{Implementation Details}\label{implementation}
  During the implementation, we ﬁrst initialize the predictive network by performing warm-up training with AN solution, which could facilitate learning a ﬁne network in the early stages. Furthermore, after each epoch, the class prior is reestimated via the trained model. The code implementation is based on PyTorch, and the experiments are conducted on GeForce RTX 3090 GPUs. 
  The batch size is selected from $ \{8, 16, 32\} $ and the number of epochs is set to $ 10 $. We use Adam as the optimizer and the learning rate and weight decay are selected from $ \{10^{-2}, 10^{-3}, 10^{-4}, 10^{-5}\} $. The hyperparameters $ \delta $ and $ \tau $ are all fixed as $ 0.01 $. All hyperparameters are selected with a validation dataset. All the comparing methods run 5 trials on each datasets. For fairness, we employed ResNet-50 as the backbone for all comparing methods.
  
  \subsection{Details of Datasets}\label{DS_details}
  The details of the four MLIC datasets and the five MLL datasets are provided in Table \ref{image_ds} and Table \ref{mll_ds} respectively. The basic statics about the MLIC datasets include the number of training set, validation set, and testing set (\#Training, \#Validation, \#Testing), and the number of classes (\#Classes). The basic statics about the MLL datasets include the number of examples (\#Examples), the dimension of features (\#Features), the number of classes (\#Classes), and the domain of the dataset (\#Domain).

  \subsection{More Results of MLL Datasets}\label{MLL_results}
  Table \ref{coverage}, \ref{hamming} and \ref{one_error} report the results of our method and other comparing methods on five MLL datasets in terms of \textit{Coverage, Hamming loss} and \textit{One Error} respectively. 
  
  \subsection{Parameter Analysis for $ \tau $ and $ \lambda $}\label{app:sens}
  Parameter analysis for $ \tau $ and $ \lambda $ are provided in Figures \ref{tau} and \ref{lambda} respectively.
  \begin{figure*}[t!]
    \centering
    \begin{subfigure}{0.4\linewidth}
    \centering
    \includegraphics[width=\linewidth]{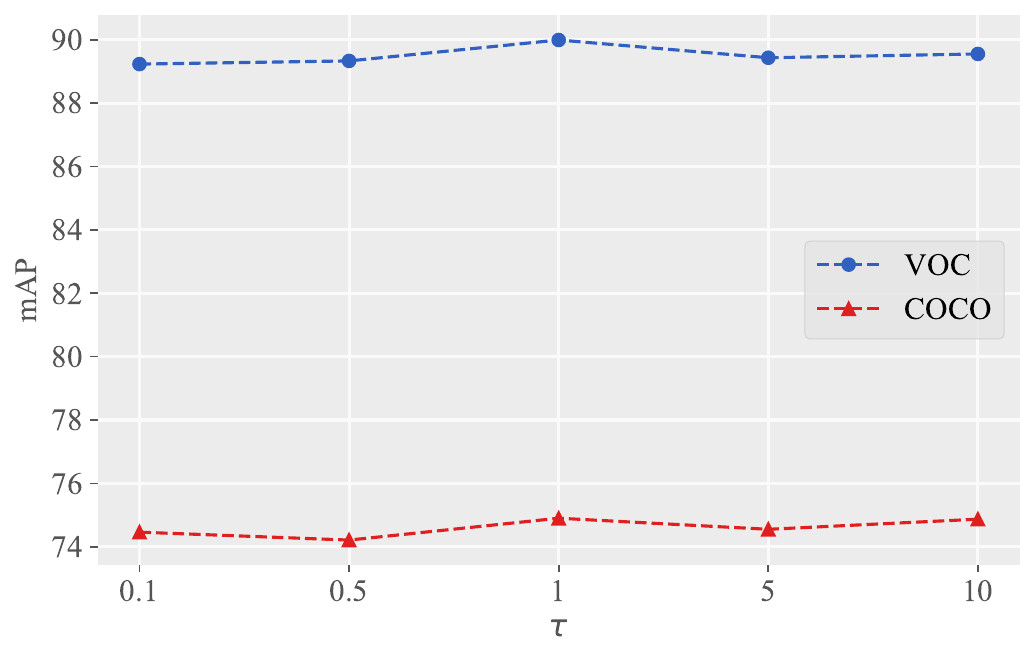}
    \caption{Sensitivity analysis of $\tau$}
    \label{tau}
    \end{subfigure}
    \begin{subfigure}{0.4\linewidth}
    \centering
    \includegraphics[width=\linewidth]{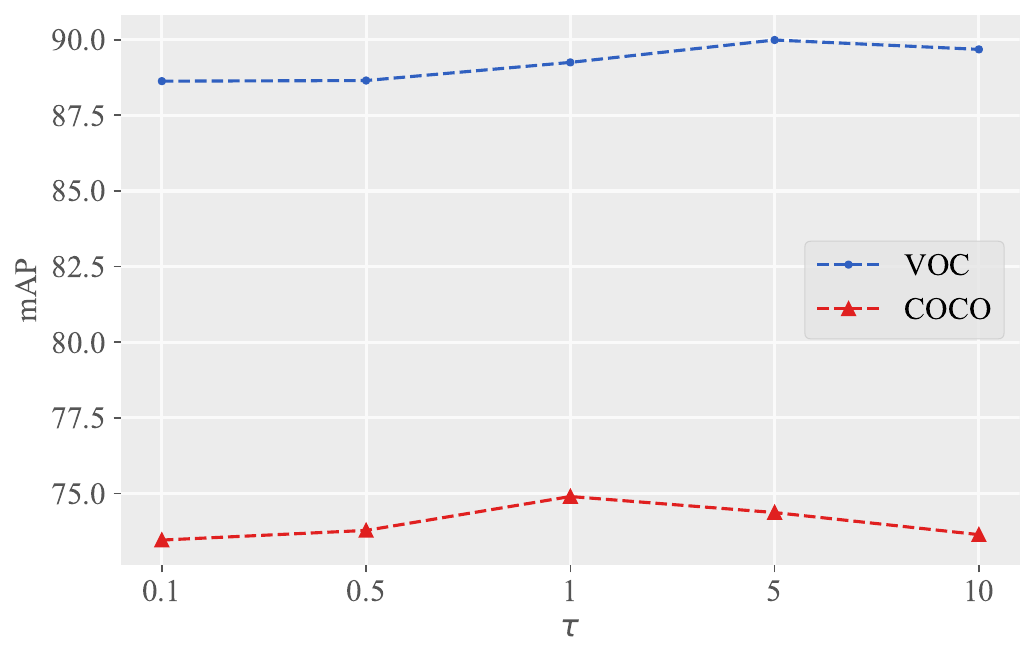}
    \caption{Sensitivity analysis of $\lambda$}
    \label{lambda}
    \end{subfigure}
    \caption{
    (a) Parameter sensitivity analysis of $\tau$ ($ \delta $ is fixed as $ 0.01 $, $ \lambda $ is fixed as $ 1 $); 
    (b) Parameter sensitivity analysis of $\lambda$ ($ \tau $ and $ \delta $  are fixed as $ 0.01 $).}
\end{figure*}

  \subsection{More Results of MLIC Datasets}\label{MLIC_results}
  
  Figure \ref{Convergence} illustrates the discrepancy between the estimated class-prior $\hat{\pi}_j$ and the true class-prior $\pi_j$ in every epoch on four MLIC datasets. During the initial few epochs, a significant decrease in the discrepancy between the estimated class-prior and the true class-prior is observed. After several epochs, the estimated class prior tends to stabilize and converges to the true class-prior. This result provides evidence that our proposed method effectively estimates the class-prior with the only observed single positive label. 

  \subsection{\textit{p}-values of the wilcoxon signed-ranks test} \label{app:wilcoxon}
  Table \ref{Wilcoxon} reports the \textit{p}-values of the wilcoxon signed-ranks test \citep{demvsar2006statistical} for the corresponding tests and the statistical test results at 0.05 signiﬁcance level.

  \subsection{Ablation results of MLL datasets}\label{app:val-mll}
  Table \ref{val-mll} reports the predictive performance of {\proposed} compared with the approach of estimating priors from the validation set (\textsc{Crisp-val}) on the MLL datasets for five metrics. The results show that {\proposed} outperforms \textsc{Crisp-val} on almost all the five metrics.

\section*{Impact Statements}
This research aims to advance the techniques and methods in the field of Machine Learning. Our approach could potentially result in the displacement of data annotators or other individuals involved in data-related occupations. We recognize the importance of addressing the implications of automation on employment and are mindful of its societal impacts.

\end{document}